\documentclass{article}

\PassOptionsToPackage{numbers,sort&compress}{natbib}
\usepackage[preprint]{neurips_2026}

\usepackage[utf8]{inputenc} 
\usepackage[T1]{fontenc}    
\usepackage[colorlinks=true,
            linkcolor=linkblue,
            citecolor=linkblue,
            urlcolor=linkblue]{hyperref}
\usepackage{url}            
\usepackage{booktabs}       
\usepackage{amsfonts}       
\usepackage{nicefrac}       
\usepackage{microtype}      
\usepackage{xcolor}         
\usepackage{amsmath}        %
\usepackage{amsthm}        %
\usepackage{enumitem}       %
\usepackage{placeins}
\usepackage{cleveref}       
    \crefname{figure}{Fig.}{Figs.}
    \crefname{table}{Tab.}{Tabs.}
    \crefname{section}{Sec.}{Secs.}
    \crefname{appendix}{App.}{Apps.}
    \crefname{equation}{Eq.}{Eqs.}
    \crefname{algorithm}{Alg.}{Algs.}
\usepackage{algorithm}
\usepackage{listings}
\usepackage[table]{xcolor}
\usepackage{graphicx}

\usepackage{fontawesome5}     
\usepackage[most]{tcolorbox}  

\definecolor{badgebg}{HTML}{F5F5F5}
\newtcbox{\linkbadge}{on line, boxrule=0pt, boxsep=0pt,
  colback=badgebg, arc=7pt,
  left=7pt, right=7pt, top=3.5pt, bottom=3.5pt,
  fontupper=\small\sffamily}
\newcommand{\reslink}[3]{\linkbadge{\href{#3}{\textcolor{black!65}{#1\;#2}}}}

\definecolor{linkblue}{HTML}{1A4E8A} 
\definecolor{codebg}{HTML}{FAFAFA}
\definecolor{codekw}{HTML}{6C3483}   
\definecolor{codefn}{HTML}{1F618D}   
\definecolor{codecm}{HTML}{7F8C8D}   
\definecolor{codenm}{HTML}{B9770E}   
\definecolor{codedoc}{HTML}{C0621B}  

\newtheorem{theorem}{Theorem}

\lstdefinestyle{pytorch}{
  language=Python,
  backgroundcolor=\color{codebg},
  basicstyle=\ttfamily\small,
  keywordstyle=\color{codekw}\bfseries,
  emphstyle=\color{codefn},
  emph={encoder,fwd_model,inv_model,F,mse_loss},
  commentstyle=\color{codecm}\itshape,
  stringstyle=\color{codedoc},
  numberstyle=\color{codenm},
  showstringspaces=false,
  frame=single,
  framerule=0pt,
  rulecolor=\color{codebg},
  xleftmargin=14pt, xrightmargin=14pt,
  framexleftmargin=14pt, framexrightmargin=14pt,
  framextopmargin=8pt, framexbottommargin=8pt,
  columns=fullflexible,
  keepspaces=true,
  aboveskip=4pt, belowskip=0pt,
  morestring=[s][\color{codedoc}]{"""}{"""},
}

\title{Sensorimotor World Models: Perception for Action via Inverse Dynamics}

\author{%
\begin{tabular}{c}
Petr Ivashkov\thanks{Correspondence to: \texttt{petr.ivashkov@tuebingen.mpg.de}.} \textsuperscript{ \normalfont1}
\quad
Randall Balestriero\textsuperscript{ \normalfont2}
\quad
Bernhard Schölkopf\textsuperscript{ \normalfont1,3,4}
\\[0.35em]
{\normalfont\textsuperscript{1}Max Planck Institute for Intelligent Systems, Tübingen, Germany}
\\
{\normalfont\textsuperscript{2}Department of Computer Science, Brown University, Providence, RI, USA}
\\
{\normalfont\textsuperscript{3}ELLIS Institute, Tübingen, Germany}
\\
{\normalfont\textsuperscript{4}ETH Zürich, Zürich, Switzerland}
\end{tabular}
}

\begin{document}

\maketitle

\vspace{-2em}
\begin{center}
  \reslink{\faGlobe}{Webpage}{https://petr-ivashkov.github.io/sensorimotor-world-model.github.io/}%
  \hspace{1em}%
  \reslink{\faGithub}{Code}{https://github.com/petr-ivashkov/sensorimotor-world-model}
\end{center}

\begin{abstract}
  {\em Perception for action} suggests that representations of the world should be shaped not by visual fidelity alone, but by their relevance for actions. At the same time, latent JEPA-style world models advocate learning compact predictive states from high-dimensional observations to facilitate the prediction of future states, but end-to-end training of these models is nontrivial because representations may collapse if our only goal is to construct a latent state that is easy to predict. We introduce a \emph{sensorimotor world model} (SMWM): a latent world model trained end-to-end with inverse dynamics regularization. This single regularizer addresses both issues: it prevents representation collapse and induces action-aligned representations. By forcing latent states to preserve information about the action underlying a transition, it biases the model toward the controllable degrees of freedom of the environment while discarding uncontrollable distractors. This yields stable latent world models trained from offline, reward-free trajectories, without frozen encoders, exponential moving averages, or complex latent regularizers. Empirically, SMWM learns compact, interpretable latent spaces and enables competitive planning performance across simple 2D and 3D control tasks.
\end{abstract}

\section{Introduction}\label{sec:introduction}

World models that predict future states given actions are central to intelligent agents \citep{ha2018world,hafner2019dream,DBLP:journals/corr/abs-1911-10500}. Learning such models directly in pixel space is difficult because observations are high-dimensional and pixel-level reconstruction encourages the model to capture background details that are often irrelevant for control. Latent world models avoid these issues by learning dynamics in a representation space rather than in observation space \citep{hafner2019dream,hafner2023mastering,hansen2022temporal,hansen2024tdmpc2}, and more recently, Joint Embedding Predictive Architectures (JEPAs) \citep{lecun2022path,assran2023self,bardes2024vjepa} dispense with reconstruction altogether and predict directly in embedding space.

\begin{figure}[hbt]
    \centering
\includegraphics[width=0.95\linewidth]{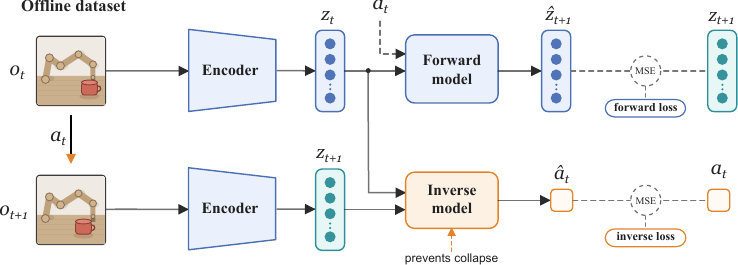}
    \caption{\textbf{Method overview.} We train an encoder $f_\theta$, a forward dynamics model $g_\phi$, and an inverse dynamics model $h_\psi$ jointly from an offline dataset of transitions $(o_t, a_t, o_{t+1})$. The encoder maps each observation to a compact embedding, $z_t = f_\theta(o_t)$ and $z_{t+1} = f_\theta(o_{t+1})$. The forward model predicts the next embedding from the current embedding and action, $\hat z_{t+1} = g_\phi(z_t, a_t)$, and is supervised by the mean-squared forward loss $\mathcal{L}_{\text{fwd}}$ between $\hat z_{t+1}$ and $z_{t+1}$. In parallel, the inverse model predicts the executed action from the two embeddings, $\hat a_t = h_\psi(z_t, z_{t+1})$, supervised by the mean-squared inverse loss $\mathcal{L}_{\text{inv}}$ between $\hat a_t$ and $a_t$. Both losses backpropagate into the encoder through the joint objective $\mathcal{L} = \mathcal{L}_{\text{fwd}} + \lambda\,\mathcal{L}_{\text{inv}}$. The inverse loss acts as an anti-collapse mechanism: to enable action recovery, $f_\theta$ must preserve the action-relevant information contained in $(o_t, o_{t+1})$.} 
    \label{fig:schematic}
\end{figure}

JEPA-style world models from pixels are appealing but susceptible to \emph{representation collapse}: when the encoder and the latent dynamics model are trained jointly to make correct predictions in embedding space, the encoder may learn to map all observations to a constant embedding, making the dynamics prediction task trivial while rendering the learned model useless for other tasks \citep{grill2020bootstrap,bardes2022vicreg}. Recent work approaches this problem from several angles. DINO-WM \citep{zhou2025dinowm} sidesteps the problem by freezing a pretrained encoder. PLDM \citep{sobal2025pldm} adds a multi-term variance--covariance regularizer on the embeddings. LeWorldModel \citep{maes2026leworldmodel,balestriero2025lejepa} matches the embedding distribution to an isotropic Gaussian via SIGReg. V-JEPA~2 \citep{assran2025vjepa2} uses stop-gradient targets from an exponential-moving-average target encoder.

While representation collapse is an important practical problem, it also points to the basic question of what a useful world model representation should retain. It has been argued that this connects to causality: a causal representation \cite{Scholkopfetal21} should not merely encode predictive correlations, but also actions and their effects, to provide an interventional world model \cite{DBLP:journals/corr/abs-1911-10500}. In cognitive psychology and neuroscience, the term {\em perception for action} captures the view that perception ultimately subserves action~\cite{Goodale-Milner}. 
{\em Common coding} proposes that perceived events and planned actions share a unified representational domain, with actions represented partly in terms of their anticipated perceptual effects \cite{Prinz1990}, rendering the content of a representation dependent on the organism's action repertoire.\footnote{This also relates to Gibson's notion of {\em affordances}: organisms perceive the environment in terms of possibilities for action, such as grasping, walking, avoiding, or manipulating \cite{Gibson1979}. Going further back, von Uexküll's concept of the \emph{Umwelt} describes the organism-specific world disclosed by a creature's sensory and motor capacities \cite{Uexkull1934}. Ultimately, the world should thus look rather different to different creatures, even if they appear to inhabit the same environment.} Representations should thus not only be judged by how much information they preserve about observations, but by whether they preserve the distinctions that matter for the actions available to the agent and for the consequences of those actions.

{\em Enactive perception} is the theory that perception is not a passive reception of sensory input, but an active bodily exploration of the world \cite{VarelaThompsonRosch1991}. A recent instantiation of this view focuses on {\em sensorimotor contingencies} as the lawful relationships between an agent's actions and the resulting changes in sensory input \cite{OReganNoe2001}. With this in mind, we return to our practical problem, i.e., collapse: can we train a latent world model whose representation respects these contingencies by retaining information sufficient to recover the agent's actions? We call the resulting object a \emph{sensorimotor world model} (SMWM): a predictive latent model whose state is shaped by the coupling between motor commands and sensory change.

We use a simple and principled mechanism to instantiate this idea: \emph{inverse dynamics regularization}. We add a single-step inverse dynamics head that predicts the action taken between two consecutive observations from their embeddings, and we propagate gradients from this head into the encoder, as illustrated in \cref{fig:schematic}. The intuition is direct: if two consecutive embeddings contain enough information to recover the action that produced the transition, they must preserve features of the observations that are relevant for controllable dynamics. This encourages the encoder to retain action-relevant information while discarding the rest. Unlike distributional regularizers that prescribe the geometry of the embedding space, inverse dynamics anchors the representation to a task-grounded quantity---the action.

\paragraph{Setting and contributions.} We consider offline datasets of trajectories consisting of video frames and corresponding continuous actions, without rewards, task labels, or knowledge of the data-collecting policy. This setting broadly covers demonstration data, action-annotated video datasets, and logged interaction data. Our contributions are: 
\begin{enumerate}
    \item SMWM, a simple \emph{sensorimotor world model} trained end-to-end from pixels, using inverse dynamics regularization as the sole anti-collapse mechanism with one additional hyperparameter,
    \item evidence that the learned representations are not merely non-collapsed, but track the controllable degrees of freedom, preserve spatial geometry, and filter out uncontrollable distractors,
    \item competitive planning performance against SIGReg-regularized baselines across 2D and 3D control tasks, showing that this structural advantage carries over to downstream control.
\end{enumerate}

\section{Background and related work}\label{sec:related}

\paragraph{Latent world models.} Learning dynamics models in learned latent spaces has a rich history. The Dreamer family \citep{hafner2019dream,hafner2020mastering,hafner2023mastering} and PlaNet \citep{hafner2019learning} learn latent dynamics with a pixel decoder under a variational objective. TD-MPC \citep{hansen2022temporal,hansen2024tdmpc2} drops the decoder and trains a joint encoder and latent dynamics model end-to-end with a temporal-difference objective. JEPAs \citep{lecun2022path,assran2023self,bardes2024vjepa} push this further, dispensing with both reconstruction and reward signal and predicting directly in embedding space. Our work belongs to the JEPA family and operates in the offline, reward-free regime.

\paragraph{Representation collapse.} The self-supervised learning literature has charted a design space of remedies for representation collapse: contrastive losses with negative samples \citep{oord2018representation,chen2020simple}, asymmetric architectures with stop-gradients \citep{grill2020bootstrap,chen2021exploring}, and regularizers on the embedding distribution \citep{zbontar2021barlow,bardes2022vicreg}. Recent JEPA world models inherit choices from this design space: DINO-WM \citep{zhou2025dinowm} freezes a pretrained encoder, PLDM \citep{sobal2025pldm} adds multiple auxiliary loss terms including variance--covariance regularization, temporal smoothness, and inverse dynamics, LeWorldModel \citep{maes2026leworldmodel,balestriero2025lejepa} matches embeddings to an isotropic Gaussian, and V-JEPA~2 \citep{assran2025vjepa2} uses stop-gradient and exponential moving average target networks. We share the JEPA training framework and the offline pixel-action setting with these works but use inverse-dynamics as the standalone anti-collapse mechanism, as described in \cref{sec:method}.

\paragraph{Inverse dynamics.} The use of inverse dynamics for representation learning is not new.  \citet{pathak2017curiosity} use it to learn features that capture controllable aspects of the environment for intrinsic-reward computation. A line of theoretical work shows that, in finite rich-observation settings with exogenous distractors, multi-step inverse objectives can recover control-relevant latent state information under certain assumptions  \citep{efroni2022provable,lamb2023guaranteed,islam2022agent,mhammedi2023musik}. While none of these results transfers to our continuous-state, continuous-action setting with a single-step inverse, we treat them as motivation for using action prediction as an inductive bias toward controllable state structure. Concurrent work has begun to combine inverse dynamics with world-model objectives in related but distinct settings \citep{han2026wam,terver2026ebjepa,yu2025auxjepa}; we discuss our differences from each of these works in \cref{app:extended}.

\paragraph{Other directions.} Several adjacent lines share architectural ingredients but address different problems. Action-free or latent-action world models infer pseudo-actions from video and use them to supervise dynamics: LAPO \citep{schmidt2024learning} and DynaMo \citep{cui2024dynamo} use a learned inverse dynamics model to extract latent actions from action-free demonstrations, and APV \citep{seo2022apv} pretrains an action-free latent video predictor for downstream action-conditional world-model learning. Visual RL methods share encoders between auxiliary objectives and policies but do not learn explicit world models \citep{laskin2020curl,kostrikov2020image,yarats2021reinforcement}. Bisimulation metrics offer a complementary lens on which states should map to similar embeddings \citep{ferns2004metrics,castro2020scalable}, and causal abstraction \citep{Rubensteinetal17} formalizes a related consistency between structural models via a commutative diagram on states and interventions. \cref{app:extended} provides further detail.

\section{Method}\label{sec:method}

\paragraph{Dataset and setting.}
We assume access to an offline dataset $\mathcal{D} = \{(o_t, a_t, o_{t+1})\}$ of transitions, where $o_t, o_{t+1} \in \mathcal{O}$ are consecutive observations (e.g., video frames) and $a_t \in \mathcal{A} \subseteq \mathbb{R}^m$ is a continuous action taken in between. We do not require a behavior policy or a reward function. Our goal is to learn an encoder $f_\theta: \mathcal{O} \rightarrow \mathcal{Z}$ that maps observations to a compact embedding space $\mathcal{Z} \subseteq \mathbb{R}^d$, together with a forward dynamics model $g_\phi: \mathcal{Z} \times \mathcal{A} \rightarrow \mathcal{Z}$ that predicts the next embedding from the current embedding and action.

\paragraph{Forward dynamics in embedding space.}
Given a transition $(o_t, a_t, o_{t+1})$, we encode both observations and train the dynamics model to match the embedding of the next observation,
\begin{equation}
    z_t = f_\theta(o_t), \quad z_{t+1} = f_\theta(o_{t+1}), \quad \hat{z}_{t+1} = g_\phi(z_t, a_t),
\end{equation}
by minimizing the mean-squared forward loss
\begin{equation}
    \mathcal{L}_{\text{fwd}} = \mathbb{E}_{(o_t, a_t, o_{t+1}) \sim \mathcal{D}} \left[ \| \hat{z}_{t+1} - z_{t+1} \|_2^2 \right].
\end{equation}
This objective alone admits a trivial solution: if $f_\theta$ maps every observation to a constant embedding $z^\star$ and $g_\phi$ outputs $z^\star$ regardless of its inputs, then $\mathcal{L}_{\text{fwd}} = 0$. Such a collapsed representation carries no information about the observation and is useless for downstream planning or control.

\paragraph{Inverse dynamics regularization.}
To prevent this collapse, we introduce an inverse dynamics model $h_\psi: \mathcal{Z} \times \mathcal{Z} \rightarrow \mathcal{A}$ that predicts the action taken between two consecutive observations from their embeddings, $\hat{a}_t = h_\psi(z_t, z_{t+1})$, trained with the mean-squared action loss
\begin{equation}
    \mathcal{L}_{\text{inv}} = \mathbb{E}_{(o_t, a_t, o_{t+1}) \sim \mathcal{D}} \left[ \| \hat{a}_t - a_t \|_2^2 \right].
\end{equation}
Under squared loss, a collapsed encoder would give the inverse model identical inputs for every transition, so the best inverse predictor is a constant action, with risk $\mathbb{E}\|a_t-\mathbb{E}[a_t]\|_2^2$. Any reduction below this constant-predictor risk requires $(z_t,z_{t+1})$ to preserve information that is predictive of $a_t$. Such a nontrivial inverse prediction thus rules out a fully collapsed representation. In contrast to distributional priors such as SIGReg's isotropic-Gaussian regularizer~\citep{maes2026leworldmodel}, this mechanism does not prescribe the geometry of the embedding space: it only requires that the encoder preserve whatever information about $a_t$ is contained in the pair $(o_t, o_{t+1})$.

\paragraph{Joint training objective.}
We jointly optimize the encoder, forward model, and inverse model by minimizing
\begin{equation}
    \mathcal{L} = \mathcal{L}_{\text{fwd}} + \lambda \, \mathcal{L}_{\text{inv}},
    \label{eq:total-loss}
\end{equation}
where $\lambda > 0$ controls the strength of the inverse dynamics regularization. The forward loss updates $f_\theta$ and $g_\phi$; the inverse loss updates $f_\theta$ and $h_\psi$. The encoder, therefore, receives gradients from both objectives, producing embeddings that are simultaneously predictive under forward dynamics and action-informative under inverse dynamics.

\section{Learned latent structure}\label{sec:dotworld}

To build intuition for what SMWM learns and to assess which structure of the world the resulting representations reflect, we instantiate the method on a minimal testbed with known ground-truth state, intrinsic dimension, and action geometry.

\paragraph{A controlled testbed.}
We consider a \emph{dot world}: a single-dot environment whose state is fully described by the 2D position $(x, y)$ of a colored dot rendered on a $64{\times}64$ white canvas; the observation $o_t \in \mathbb{R}^{3 \times 64 \times 64}$ is the rendered image. Actions $a_t = (\Delta x, \Delta y) \in \mathcal{A} \subset \mathbb{R}^2$ displace the dot, so the dynamics are $s_{t+1} = s_t + a_t$ in the state space. The dataset $\mathcal{D}$ is constructed by sampling, for each transition, a position and a random displacement. The encoder $f_\theta$ is a small CNN with the latent dimension $d = 64$, and the forward $g_\phi$ and inverse $h_\psi$ models are two-layer MLPs. The exact architecture, dataset, and training details are deferred to \cref{app:dotworld}.

\paragraph{Recovered intrinsic dimension.}
First, we show that $f_\theta$ identifies the true 2D structure of the world from pixels and actions alone. \cref{fig:dotworld-pca} reports the PCA spectrum of the embeddings $\{f_\theta(o)\}$ over a uniform grid of dot positions. Two principal components carry essentially all the variance in the embedding space, and the spectrum drops sharply at the true intrinsic dimension $d_{\text{true}} = 2$; the remaining $62$ directions are effectively collapsed. Projecting the embeddings onto the top two principal components reveals that the latent geometry is not only low-dimensional but also \emph{spatially faithful}: the grid of world states maps to a near-square grid in latent space. A mild deformation occurs only near the boundary, which corresponds to out-of-distribution states. As a result, spatial neighbors in the observation space remain neighbors in the latent space. The encoder thus recovers an approximately 2-dimensional, topology-preserving representation of the state space without any ground-truth state supervision.

\begin{figure}[tb]
    \centering
    \includegraphics[width=0.95\linewidth]{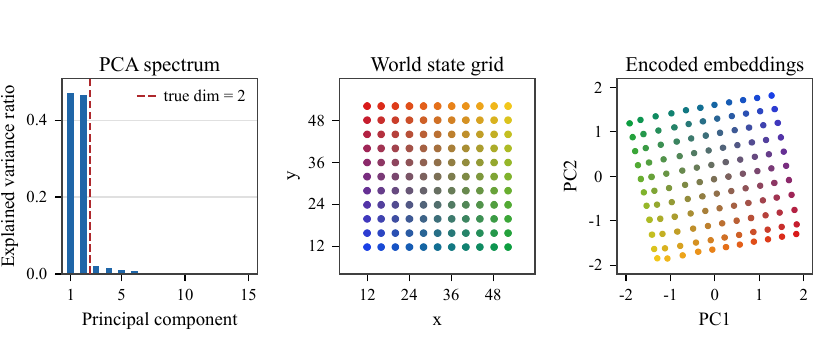}
    \caption{\textbf{Dot world latent geometry.} \textbf{Left:} PCA spectrum of the learned embeddings; the explained-variance ratio drops sharply past the true intrinsic dimension $d_{\text{true}} = 2$ (red dashed line). \textbf{Center:} grid of probe world states $(x, y)$, color-coded by position. \textbf{Right:} the same probes embedded by $f_\theta$ and projected onto the top two principal components. Despite no state supervision, the encoder recovers an effectively 2-dimensional representation inside a 64-dimensional ambient latent space. Moreover, the recovered representation is spatially faithful to the true state space. The latent grid appears rotated relative to $(x, y)$, as expected: PC axes are identifiable only up to rotation and sign.
    }
    \label{fig:dotworld-pca}
\end{figure}

\paragraph{Forward model.}
The encoder and forward model should jointly satisfy a consistency condition: encoding the observation reached after taking action $a$ should agree with first encoding the current observation and then applying $g_\phi(\cdot, a)$. For ease of notation, we drop the indices and write $g_a(z) := g_\phi(z, a)$ and $f$ for $f_\theta$, rendering the consistency condition
\begin{equation}
    f(a(o)) \overset{!}{=} g_{a}(f(o)).
    \label{eq:homomorphism}
\end{equation}
In other words, the diagram on the left of \cref{fig:dotworld-rollout} must commute \citep{Hertz1899,Rubensteinetal17}. To test it, we encode an initial observation $o_1$, unroll $g$ autoregressively along a 5-step action sequence $a_1,\dots a_5$, and compare the predicted embeddings against the encoded ground-truth embeddings $z_t = f(o_t)$. The predicted and encoded trajectories agree closely along the entire rollout (\cref{fig:dotworld-rollout}, right), illustrating that \cref{eq:homomorphism} approximately holds on the trained model. Each $a_t \in \mathcal{A}$ induces a corresponding displacement $\rho(a_t)$ in the relevant subspace of the latent space (with $\rho(a)$ approximately independent of $z$), so the forward model behaves as $g_a(z) \approx z + \rho(a)$. The trained model thus delivers, in addition to the state representation $f$, a representation of actions as latent translations $g_a$, without any term in $\mathcal{L}$ enforcing such a structure.

\begin{figure}[tb]
    \centering
    \includegraphics[width=0.95\linewidth]{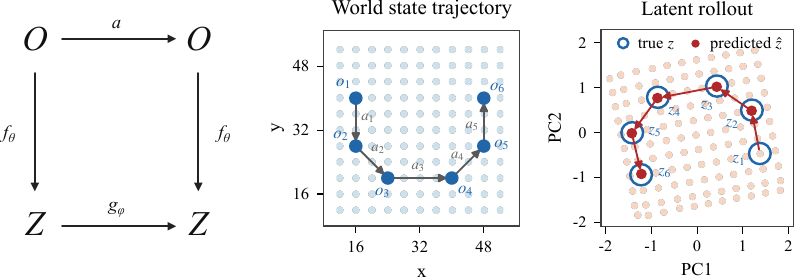}
    \caption{\textbf{Encoder and forward model commute.} \textbf{Left:} Equivariance that should be satisfied by the learned representation: $f \circ a = g_a \circ f$. \textbf{Center:} a 5-step trajectory in world-state space with actions $a_1, \dots, a_5$. \textbf{Right:} the corresponding rollout in latent space; predictions $\hat z_t$ (filled red) obtained by autoregressive application of $g$ track the encoded ground-truth embeddings $z_t = f(o_t)$ (open blue) along the entire rollout. Therefore, the encoder and forward model approximately commute, and world actions act as translations in latent space.}
    \label{fig:dotworld-rollout}
\end{figure}

\paragraph{Equivariance and latent action composition.}
The above approximate equivariance
condition is encouraged by the forward prediction loss. Denote the effect of action $a$ on the observation $o$ by $o\mapsto a(o)$.\footnote{We overload notation: in some cases, $a\in\mathcal A\subseteq\mathbb R^m$ denotes an action vector.} The forward loss minimizes, in our simplified notation,
\[
    \left\| f(a(o)) - g_a(f(o)) \right\|_2^2 .
\]
On the support of the training distribution and up to approximation error, this encourages
\[
    f(a(o)) \approx g_a(f(o)).
\]
This means that encoding after applying the intervention approximately agrees with first encoding and then applying the learned latent intervention $g_a$.\footnote{The encoder thus intertwines the physical intervention with the learned
latent intervention, making this a statement of {\em equivariance}. In physics terminology one might also say that the latent state transforms
{\em covariantly} with the intervention.}

This one-step equivariance has a compositional consequence. If actions form a semigroup under composition and equivariance holds exactly, then physical action composition must be represented by composition of the learned latent interventions, at least on the encoded data manifold:

\begin{theorem}
\label{thm:equivariance-homomorphism}
Let \(\mathcal A\) be a semigroup. If \(f\) is equivariant, i.e., $f(a(o)) = g_a(f(o))$ for all \(a\in\mathcal A\) and \(o\in\mathcal O\), then \(a\mapsto g_a\) is a homomorphism on \(f(\mathcal O)\). That is, for all \(a_1,a_2\in\mathcal A\) and \(z\in f(\mathcal O)\),
\begin{equation}
    g_{a_2\circ a_1}(z)
    =
    g_{a_2}\!\left(g_{a_1}(z)\right).
\end{equation}
\end{theorem}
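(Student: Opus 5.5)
The argument is a direct diagram chase. The only interpretive choice is how the semigroup structure on \(\mathcal A\) interacts with its action on observations. I will read ``\(\mathcal A\) is a semigroup'' as saying that \(\mathcal A\) acts on \(\mathcal O\) compatibly with composition: for all \(a_1,a_2\in\mathcal A\) and \(o\in\mathcal O\),
\[
(a_2\circ a_1)(o)=a_2(a_1(o)).
\]
This is the convention implicit in the notation \(o\mapsto a(o)\). I will state it explicitly at the start, since the claim is meaningless without it.

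Fix \(a_1,a_2\in\mathcal A\) and \(z\in f(\mathcal O\)). Because \(z\) lies in the image of \(f\), I choose a preimage \(o\in\mathcal O\) with \(z=f(o)\). This is why the statement is restricted to \(f(\mathcal O)\). Off the image, equivariance gives no information about \(g_a\), so nothing can be concluded there.

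The calculation applies equivariance three times:
\[
g_{a_2\circ a_1}(z)=g_{a_2\circ a_1}(f(o))=f\bigl((a_2\circ a_1)(o)\bigr)=f\bigl(a_2(a_1(o))\bigr).
\]
The second equality uses equivariance for the composite action \(a_2\circ a_1\in\mathcal A\), which is available because \(\mathcal A\) is closed under composition. The third equality uses the compatibility of the action with composition. I then apply equivariance to the pair \((a_2,a_1(o))\), which is legitimate because \(a_1(o)\in\mathcal O\):
\[
f\bigl(a_2(a_1(o))\bigr)=g_{a_2}\bigl(f(a_1(o))\bigr).
\]
Applying equivariance once more to \((a_1,o)\) gives
\[
g_{a_2}\bigl(f(a_1(o))\bigr)=g_{a_2}\bigl(g_{a_1}(f(o))\bigr)=g_{a_2}\bigl(g_{a_1}(z)\bigr),
\]
which is the desired identity.

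Two details need checking, and neither is hard.

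\emph{Independence of the preimage.} The argument works for any preimage \(o\). Both sides of the identity depend only on \(z\), so the choice of \(o\) never affects the result.

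\emph{Closure of the image.} We have \(g_{a_1}(z)=f(a_1(o))\in f(\mathcal O)\). So \(f(\mathcal O)\) is invariant under every \(g_a\), and \(a\mapsto g_a|_{f(\mathcal O)}\) is a semigroup homomorphism into the self-maps of \(f(\mathcal O)\) under composition.

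The step most likely to need care is not computational. It is making explicit that the physical action is a semigroup action on \(\mathcal O\), meaning the composite of two actions acts as the sequential application. I will flag this as the standing interpretation of the hypothesis.
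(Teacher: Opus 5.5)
Your proposal is correct and follows the paper's proof essentially step for step: pick a preimage \(o\) of \(z\), use closure to get \(a_2\circ a_1\in\mathcal A\), and apply equivariance at \((a_2\circ a_1,o)\), \((a_2,a_1(o))\), and \((a_1,o)\). Two of your additions are useful clarifications but do not change the argument: you state explicitly the compatibility \((a_2\circ a_1)(o)=a_2(a_1(o))\), which the paper uses silently in its unlabeled middle step, and you observe that \(f(\mathcal O)\) is invariant under each \(g_a\).
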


We prove \cref{thm:equivariance-homomorphism} in \cref{app:equivariance-composition}. In the learned model, this identity should be interpreted approximately.

Equivariance alone, however, admits degenerate solutions, including collapse. Useful representations, therefore, additionally require some form of \emph{faithfulness} of $g$ on $f({\cal O})$, i.e., every $a \in \cal A$ that acts nontrivially on $\cal O$ must also act nontrivially on $f({\cal O})$. More strongly, we can require that the latent action is \emph{identifiable from observations}: for every pair $(z, z') \in f({\cal O}) \times f({\cal O})$ with $z' = g_a(z)$, the action $a$ is recoverable from $(z, z')$. We achieve this by training an \emph{inverse} model that predicts $a$ from $(z, z')$.

Beyond a homomorphism into latent transformations under composition, the experiments in \cref{fig:dotworld-rollout} suggest that in our case the homomorphism may (approximately) go into the additive group, i.e., the learned interventions appear to act approximately as translations,
\[
    g_a(z) \approx z+\rho(a),
\]
with $\rho(a)$ approximately independent of $z$.  It is intriguing to ask where it may stem from. The inverse-dynamics loss provides a plausible bias toward this translation-like regime: a simple way to make \(a_t\) recoverable from \((z_t,z_{t+1})\) is to encode the action in the displacement vector
\begin{equation}
    z_{t+1}-z_t \approx \rho(a_t).
\end{equation}
If $\rho$ is injective on the action support, then the inverse $h$ can decode the
action by applying an approximate inverse of $\rho$ to this displacement.  This
mechanism does not enforce additivity, but it renders an additive, translation-like
representation a low-complexity solution.

\paragraph{Controllable degrees of freedom.}
The 2D dot world establishes that the learned representation has the right intrinsic dimension when it equals the action dimension. To stress-test this property, we vary the controllable degrees of freedom and add uncontrollable distractors. \cref{fig:dotworld-multidot} reports four configurations. \emph{Independent} contains two dots controlled by independent displacements, giving a 4D state and a 4D action. \emph{Coupled} contains two dots that move together under a single 2D displacement. \emph{Distractor} contains one controlled dot together with a second dot that moves randomly on its own and is not controlled by the action. \emph{Combined} merges all three: two independent dots, one coupled pair, and one distractor, with a 6D action vector.

In every case, the number of significant principal components matches the controllable dimension of the system, with the remaining directions effectively collapsed. One may notice that the spectra within the controllable subspace are not exactly uniform, even though the state space is uniformly covered under uniform coordinate sampling. However, this is expected: the encoder is free to rescale individual dimensions and concentrate variance unevenly, so what matters is the rank of the embedding, not the uniformity of its spectrum. Of the four configurations, the distractor case is particularly informative: even though the random dot's position varies in the observation, the encoder ignores it. This is the expected outcome of \cref{eq:total-loss}: a distractor's state is neither needed to recover the action from a transition (it is independent of $a_t$) nor predictable from past latent state under $g_\phi$ (since it evolves stochastically), so encoding it would only inflate $\mathcal{L}_{\text{fwd}}$ without reducing $\mathcal{L}_{\text{inv}}$. 

\begin{figure}[tb]
    \centering
    \includegraphics[width=0.95\linewidth]{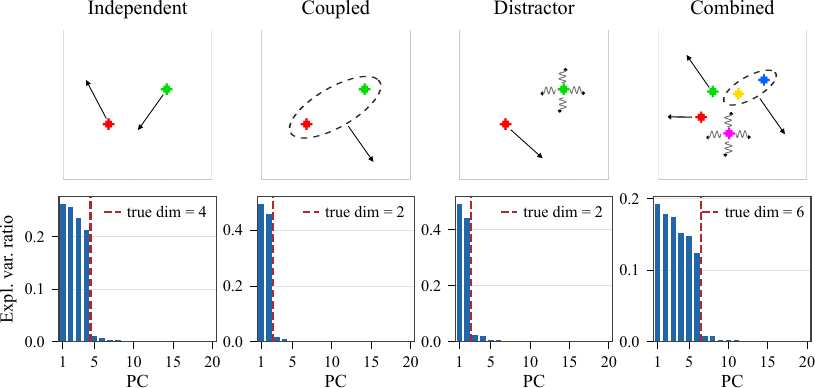}
    \caption{\textbf{Effective latent dimension tracks controllable degrees of freedom.} \textbf{Top row:} four dot-world configurations with controllable dimensions $4$, $2$, $2$, and $6$; in \emph{Distractor} and \emph{Combined}, the wavy-arrowed dot moves randomly and is not controlled by the action. \textbf{Bottom row:} PCA spectra of the corresponding learned embeddings, with the true intrinsic dimension marked by the red dashed line. The encoder allocates significant variance to exactly the controllable degrees of freedom and filters out uncontrollable distractors, correctly recovering a minimal necessary subspace inside the higher-dimensional ambient latent space.}
    \label{fig:dotworld-multidot}
\end{figure}

\paragraph{Visualizing control-dependent perception.}
As a complementary visualization, we ask whether the model ``sees'' the same object differently depending on what is controllable. To make this explicit, we introduce an additional toy environment in \cref{app:spriteworld}: an asymmetric 2D triangular sprite whose pose is \((x,y,\theta)\). The sprite can translate and rotate, but we vary the action repertoire by exposing only some of the pose changes as actions, while the remaining degrees of freedom continue to move randomly. After training, we freeze the encoder and train a post-hoc decoder from \(z_t\) to \(o_t\). The reconstructions show that the representation preserves the control-relevant pose variables and averages out the uncontrolled ones: for example, when agent rotations are uncontrolled, the decoded triangle becomes an orientation-averaged blob, whereas full \(x/y/\theta\) control yields a sharp oriented reconstruction.

Together, these experiments show that SMWM learns representations that are low-dimensional, spatially faithful, and structure-preserving with respect to the action-induced dynamics.

\section{Planning in latent space}\label{sec:planning}

The latent structure recovered above is only useful insofar as it carries over to richer environments and supports actionable downstream control.

\paragraph{Setup.} We perform goal-conditioned trajectory optimization directly in the learned latent space, following the planning setup of LeWM~\citep{maes2026leworldmodel}. Given a current observation \(o_1\) and a goal observation \(o_g\), we encode both with the frozen encoder, \(z_1 = f_\theta(o_1)\) and \(z_g = f_\theta(o_g)\), and roll the forward model out autoregressively from \(\hat z_1 = z_1\), \(\hat z_{t+1} = g_\phi(\hat z_t, a_t)\), to predict the latent trajectory induced by a candidate action sequence \(a_{1:H}\) over a planning horizon \(H\). Each candidate is scored by the terminal goal-matching cost
\begin{equation}
    \mathcal{C}(a_{1:H}) = \| \hat z_{H+1} - z_g \|_2^2,
    \label{eq:planning-cost}
\end{equation}
which we minimize with the Cross-Entropy Method (CEM)~\citep{rubinstein1997optimization}: at each iteration, CEM samples action sequences from a Gaussian, scores them via latent rollouts under \(g_\phi\), and refits the sampling distribution to the top-scoring fraction. Because autoregressive rollouts accumulate prediction error as \(H\) grows, we wrap CEM in a receding-horizon model-predictive control (MPC) loop, executing only the first \(K\le H\) optimized actions before re-encoding the resulting observation and replanning. Throughout, \(f_\theta\) and \(g_\phi\) remain frozen; only the action sequence is optimized. 

\paragraph{Environments.} We evaluate on four tasks shown in \cref{fig:main-planning}: TwoRoom (2D navigation), Reacher (continuous control), Push-T (2D contact-rich manipulation), and OGBench-Cube (3D tabletop manipulation). Datasets and environment details are deferred to \cref{app:environments}.

\paragraph{Baselines.} We compare against three baselines. \texttt{SIGReg}~\citep{maes2026leworldmodel} is our main head-to-head: it shares the same encoder and predictor but replaces \(\mathcal{L}_{\text{inv}}\) with a Gaussian-matching regularizer on the latent embeddings. \texttt{Forward-only} (\(\lambda = 0\)) ablates the regularizer entirely, isolating the contribution of our anti-collapse mechanism. \texttt{Random} samples actions uniformly from the action space and serves as a floor reference. Full implementation details are in the appendix.

\paragraph{Architecture.} We adopt LeWM's encoder and predictor~\citep{maes2026leworldmodel}: the encoder is a ViT-Tiny (\(\sim\)5M parameters) whose final \textsc{cls} token is projected to \(z \in \mathbb{R}^d\) with $d=192$, and \(g_\phi\) is a small transformer (\(\sim\)10M parameters) that predicts \(\hat z_{t+1}\) from a single \((z_t, a_t)\) pair, with \(a_t\) injected via AdaLN-zero. Our only architectural addition is the inverse model \(h_\psi\): a 2-layer MLP of width \(256\) mapping \([z_t;\, z_{t+1}] \in \mathbb{R}^{2d}\) to \(\hat a_t \in \mathbb{R}^m\). Details are provided in \cref{sec:implementation}.

\begin{figure}[tb]
    \centering
    \includegraphics[width=0.95\linewidth]{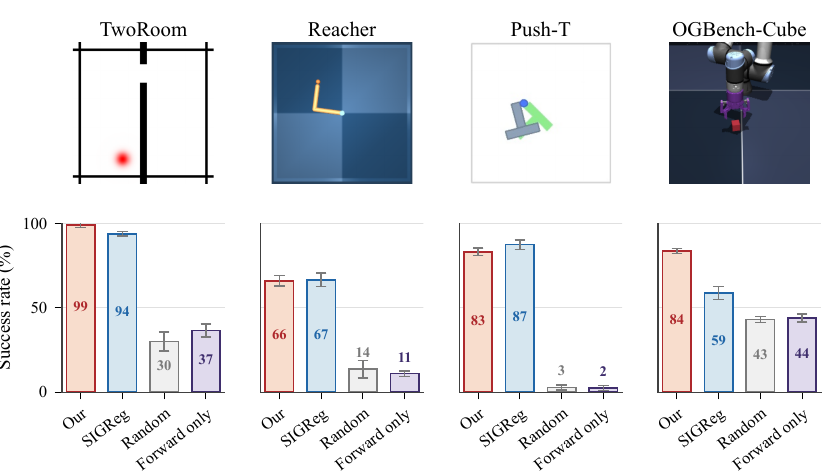}
    \caption{\textbf{Planning success across environments.} \textbf{Top:} the four evaluation environments---TwoRoom (2D navigation), Reacher (continuous control), Push-T (2D contact-rich manipulation), and OGBench-Cube (3D tabletop manipulation). \textbf{Bottom:} goal-conditioned planning success rate (mean and standard error over five seeds) under a fixed budget of \(50\) environment steps and a goal placed \(25\) steps ahead of the initial state. SMWM matches the SIGReg baseline on the three 2D tasks and clearly outperforms it on OGBench-Cube. Both regularized variants dominate the \emph{Forward-only} ablation and the \emph{Random}-action baseline, confirming that an effective anti-collapse mechanism is necessary for usable planning.}
    \label{fig:main-planning}
\end{figure}

\paragraph{Planning performance.} All methods are evaluated with a fixed budget of \(50\) environment steps per episode and goals placed \(25\) steps ahead of the initial state. Both regularized variants substantially outperform Forward-only and Random across all four environments (\cref{fig:main-planning}). SMWM approximately matches SIGReg on the three 2D tasks---TwoRoom, Reacher, and Push-T---and clearly outperforms it on OGBench-Cube, the only environment with a 3D contact-rich manipulation, where SMWM retains \(84\%\) success while SIGReg drops to \(59\%\).

\paragraph{Probing physical state.} To assess what physical content the embeddings retain, we train linear and two-layer MLP probes on frozen embeddings to regress the ground-truth state of each environment; held-out \(R^2\) values are reported in \cref{tab:physical_quantity_probe_r2}. 

\begin{table}[tb]
\centering
\caption{\textbf{Probing the physical state.} Held-out \(R^2\) for linear and two-layer MLP probes regressing ground-truth physical quantities from frozen embeddings; the best score per row and probe type is in bold. Both regularized methods (SMWM and LeWM/SIGReg) recover the underlying physical state near-perfectly under MLP probes, with the exception of Push-T block pose.}
\label{tab:physical_quantity_probe_r2}
\small
\setlength{\tabcolsep}{5pt}
\begin{tabular}{llrrrrrr}
\toprule
Environment & Quantity & \multicolumn{3}{c}{Linear probe} & \multicolumn{3}{c}{MLP probe} \\
\cmidrule(lr){3-5}\cmidrule(lr){6-8}
 & & SMWM & SIGReg & Fwd.-only & SMWM & SIGReg & Fwd.-only \\
\midrule
TwoRoom & agent position & \textbf{1.000} & 0.996 & 0.832 & \textbf{1.000} & 1.000 & 0.991 \\
\arrayrulecolor{black!35}\midrule\arrayrulecolor{black}
Reacher & joint angles & 0.946 & \textbf{0.999} & 0.647 & 0.995 & \textbf{1.000} & 0.962 \\
\arrayrulecolor{black!35}\midrule\arrayrulecolor{black}
Push-T & agent position & \textbf{0.993} & 0.954 & 0.293 & \textbf{0.988} & 0.986 & 0.433 \\
 & block position & 0.946 & \textbf{0.973} & 0.352 & 0.981 & \textbf{0.996} & 0.623 \\
 & block orientation & 0.664 & \textbf{0.931} & 0.202 & 0.882 & \textbf{0.987} & 0.491 \\
\arrayrulecolor{black!35}\midrule\arrayrulecolor{black}
Cube & cube position & \textbf{0.998} & 0.975 & 0.706 & \textbf{0.998} & 0.993 & 0.854 \\
 & gripper position & \textbf{0.999} & 0.986 & 0.902 & \textbf{0.999} & 0.995 & 0.979 \\
 & gripper opening & \textbf{0.989} & 0.961 & 0.269 & \textbf{0.992} & 0.983 & 0.427 \\
\bottomrule
\end{tabular}
\end{table}

Both regularized methods recover the underlying state near-perfectly under MLP probes, and their linear-probe scores stay near saturation across most quantities, with SMWM holding a consistent linear advantage on the Push-T agent position and on all three Cube quantities, and SIGReg holding a corresponding advantage on the Reacher joint angles and the Push-T block pose. The Forward-only ablation is markedly poorer, particularly under linear probes, but is rarely fully degenerate---some residual variance survives even in a partially collapsed encoder. Both regularized representations therefore make the underlying physical state recoverable. 

\paragraph{Latent geometry.} We next turn to the geometry that organizes them. \cref{fig:environment_embeddings} reports, for each environment, the PCA spectrum of held-out embeddings (top), the distribution of a representative ground-truth quantity in physical state space (middle), and the embeddings projected onto a 2- or 3-dimensional PC subspace (bottom). In every environment, the spectrum drops sharply, confirming that the embeddings concentrate on an effectively low-dimensional subspace of the \(192\)-dimensional ambient latent space. The corresponding SIGReg visualizations are provided in \cref{app:sigreg_geometry}; while SIGReg also prevents collapse, its latent geometry appears less consistently organized by the controllable coordinates.

\begin{figure}[tb]
    \centering
    \includegraphics[width=0.95\linewidth]{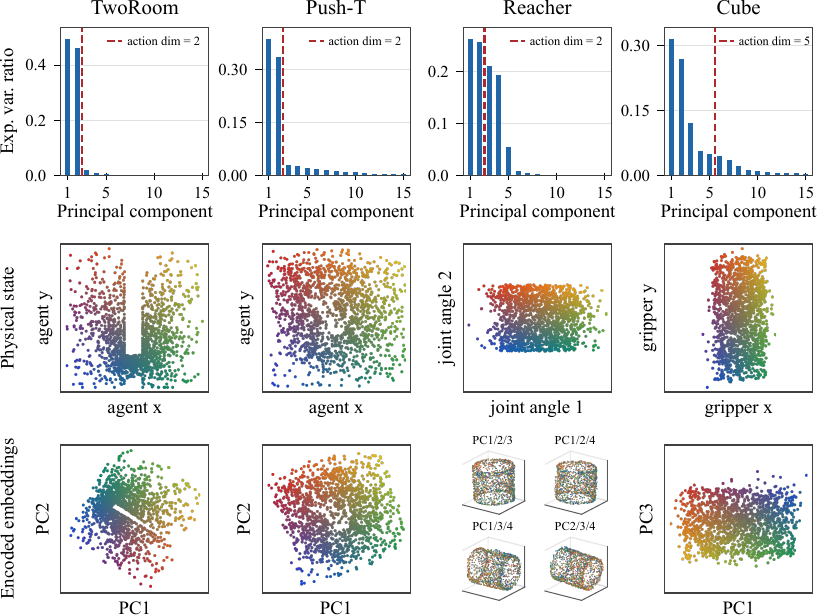}
    \caption{\textbf{Latent geometry of SMWM embeddings.} For each environment we show the PCA spectrum of held-out embeddings (top), the distribution of a representative ground-truth quantity in physical state space (middle), and the embeddings projected onto a 2- or 3-dimensional PC subspace, color-coded by the same physical quantity (bottom). The dashed red lines mark the action dimension. Across all four environments, the embeddings concentrate on a low-dimensional subspace whose geometry mirrors the controllable state space: Cartesian coordinates (TwoRoom and Push-T agent, Cube gripper) appear as approximately linear directions, while the periodic joint angles in Reacher are encoded as a flat 2-torus, visible as a cylindrical structure in any 3D PC projection.}
    \label{fig:environment_embeddings}
\end{figure}

In TwoRoom, the encoder behaves much as in the dot world: \(\mathrm{PC}_1\) and \(\mathrm{PC}_2\) recover a spatially faithful Cartesian map of the agent, and the wall—a region unreachable to the agent—leaves an empty band in latent space, providing further evidence that the recovered representation is spatially faithful. In Push-T, the agent position is again encoded in \((\mathrm{PC}_1, \mathrm{PC}_2)\), but the spectrum has heavier tails: the embedding must additionally accommodate the position and orientation of the manipulable block (cf.~\cref{tab:physical_quantity_probe_r2}). Reacher reveals a richer geometry. Its state space is the torus \(\mathbb{T}^2\) of two periodic angles \((\theta_1, \theta_2)\), which cannot be continuously encoded into \(\mathbb{R}^2\) due to a discontinuity at \(\theta = 0 \sim 2\pi\). A natural representation is the product-of-circles embedding $(\theta_1,\theta_2) \mapsto (\cos\theta_1,\sin\theta_1,\cos\theta_2,\sin\theta_2)$, which lifts \(\mathbb{T}^2\) into \(\mathbb{R}^4\). Its coordinate 3D projections are circular cylinders, which is exactly the signature we observe in the encoded embeddings: the spectrum exposes four appreciable components, and every 3D PC subspace in \cref{fig:environment_embeddings} (bottom) shows a clear cylindrical shape. The two angles, however, spread across all four components, which is consistent with arbitrary linear mixing of the product-of-circles coordinates. Finally, in Cube the gripper position is recovered in \((\mathrm{PC}_1, \mathrm{PC}_3)\); \(\mathrm{PC}_2\) and the heavier tail of the spectrum carry the cube pose and gripper opening (cf.~\cref{tab:physical_quantity_probe_r2}).

\paragraph{Takeaway.}
Across all four environments, SMWM recovers compact latent spaces whose dimension and geometry mirror controllable structure: Cartesian degrees of freedom form approximately linear directions, while periodic variables form circles. The structure carries over to 2D and 3D control, with the largest planning gain over SIGReg appearing on the 3D environment.

\section{Discussion}\label{sec:discussion}

We showed that inverse dynamics regularization addresses two coupled issues simultaneously. First, as an anti-collapse mechanism, a single-step inverse dynamics head suffices to stabilize end-to-end training of a JEPA-style latent world model from pixels, while introducing only one hyperparameter and imposing no distributional prior on the embedding space. Second, the same mechanism biases the encoder toward a compact, action-sufficient representation due to the joint pressure from the forward and inverse dynamics loss terms. Empirically, this pressure yields representations that track the controllable degrees of freedom of the environment, preserve both Cartesian and periodic state geometry, and filter out uncontrollable distractors, although no term in the objective explicitly enforces this structure. The learned representation further supports latent planning that matches or outperforms a strong SIGReg-regularized baseline across four control tasks, with the largest margin on the most complex 3D manipulation environment with a five-dimensional action space. Taken together, these results support the view of the sensorimotor world model as an effective and simple instantiation of a ``perception for action'' bias in latent world-model learning. 

\paragraph{Limitations and future work.} Our analysis assumes that actions are recoverable from consecutive observations, which may fail when distinct actions induce the same visible change. Relatedly, because the encoder currently maps individual frames to latent states, it will not capture quantities that are not identifiable from a single frame, such as velocities, even when these quantities would improve forward prediction; incorporating short observation histories into the encoder or predictor is a natural extension. Furthermore, the action-aligned bias, while desirable, may discard information needed for downstream tasks that depend on aspects of the environment irrelevant to the training actions (we consider this a feature rather than a bug). In addition, biased behavior policies may create action-correlated but uncontrollable distractors, which could be captured by the encoder. Finally, our planning experiments inherit the usual limitations of offline world models: limited dataset coverage constrains the reachable plans, long-horizon rollouts remain vulnerable to compounding model error, and our empirical validation is restricted to moderate-scale simulated control tasks. Future work should study more robust data regimes, multi-frame, and multi-step inverse objectives. While we use inverse loss only as a regularizer, inverse models themselves could be integrated into latent or hierarchical planners to reduce reliance on long autoregressive forward rollouts.

\bibliography{references}

@article{ha2018world,
  title={World models},
  author={Ha, David and Schmidhuber, J{\"u}rgen},
  journal={arXiv preprint arXiv:1803.10122},
  year={2018}
}

@inproceedings{hafner2019dream,
  title={Dream to control: Learning behaviors by latent imagination},
  author={Hafner, Danijar and Lillicrap, Timothy and Ba, Jimmy and Norouzi, Mohammad},
  booktitle={ICLR},
  year={2020}
}

@inproceedings{hafner2020mastering,
  title={Mastering {Atari} with discrete world models},
  author={Hafner, Danijar and Lillicrap, Timothy and Norouzi, Mohammad and Ba, Jimmy},
  booktitle={ICLR},
  year={2021}
}

@article{hafner2023mastering,
  title={Mastering diverse domains through world models},
  author={Hafner, Danijar and Pasukonis, Jurgis and Ba, Jimmy and Lillicrap, Timothy},
  journal={arXiv preprint arXiv:2301.04104},
  year={2023}
}

@inproceedings{hafner2019learning,
  title={Learning latent dynamics for planning from pixels},
  author={Hafner, Danijar and Lillicrap, Timothy and Fischer, Ian and Villegas, Ruben and Ha, David and Lee, Honglak and Davidson, James},
  booktitle={ICML},
  year={2019}
}

@inproceedings{hansen2022temporal,
  title={Temporal difference learning for model predictive control},
  author={Hansen, Nicklas and Wang, Xiaolong and Su, Hao},
  booktitle={ICML},
  year={2022}
}

@article{hansen2024tdmpc2,
  title={TD-MPC2: Scalable, robust world models for continuous control},
  author={Hansen, Nicklas and Su, Hao and Wang, Xiaolong},
  journal={ICLR},
  year={2024}
}

@inproceedings{grill2020bootstrap,
  title={Bootstrap your own latent: A new approach to self-supervised learning},
  author={Grill, Jean-Bastien and Strub, Florian and Altch{\'e}, Florent and Tallec, Corentin and Richemond, Pierre and Buchatskaya, Elena and Doersch, Carl and Avila Pires, Bernardo and Guo, Zhaohan and Gheshlaghi Azar, Mohammad and others},
  booktitle={NeurIPS},
  year={2020}
}

@inproceedings{oord2018representation,
  title={Representation learning with contrastive predictive coding},
  author={van den Oord, Aaron and Li, Yazhe and Vinyals, Oriol},
  booktitle={arXiv preprint arXiv:1807.03748},
  year={2018}
}

@inproceedings{zbontar2021barlow,
  title={Barlow twins: Self-supervised learning via redundancy reduction},
  author={Zbontar, Jure and Jing, Li and Misra, Ishan and LeCun, Yann and Deny, St{\'e}phane},
  booktitle={ICML},
  year={2021}
}

@inproceedings{bardes2022vicreg,
  title={VICReg: Variance-invariance-covariance regularization for self-supervised learning},
  author={Bardes, Adrien and Ponce, Jean and LeCun, Yann},
  booktitle={ICLR},
  year={2022}
}

@inproceedings{chen2020simple,
  title={A simple framework for contrastive learning of visual representations},
  author={Chen, Ting and Kornblith, Simon and Norouzi, Mohammad and Hinton, Geoffrey},
  booktitle={ICML},
  year={2020}
}

@inproceedings{chen2021exploring,
  title={Exploring simple siamese representation learning},
  author={Chen, Xinlei and He, Kaiming},
  booktitle={CVPR},
  year={2021}
}

@inproceedings{pathak2017curiosity,
  title={Curiosity-driven exploration by self-supervised prediction},
  author={Pathak, Deepak and Agrawal, Pulkit and Efros, Alexei A and Darrell, Trevor},
  booktitle={ICML},
  year={2017}
}

@article{
lamb2023guaranteed,
title={Guaranteed Discovery of Control-Endogenous Latent States with Multi-Step Inverse Models},
author={Alex Lamb and Riashat Islam and Yonathan Efroni and Aniket Rajiv Didolkar and Dipendra Misra and Dylan J Foster and Lekan P Molu and Rajan Chari and Akshay Krishnamurthy and John Langford},
journal={Transactions on Machine Learning Research},
issn={2835-8856},
year={2023},
url={https://openreview.net/forum?id=TNocbXm5MZ},
note={}
}

@article{islam2022agent,
  title={Agent-controller representations: Principled offline {RL} with rich exogenous information},
  author={Islam, Riashat and Tomar, Manan and Lamb, Alex and Efroni, Yonathan and Zang, Hongyu and Didolkar, Aniket and Misra, Dipendra and Li, Xin and van Seijen, Harm and Tachet des Combes, Remi and Langford, John},
  journal={arXiv preprint arXiv:2211.00164},
  year={2022}
}

@article{lecun2022path,
  title={A path towards autonomous machine intelligence},
  author={LeCun, Yann},
  journal={OpenReview},
  year={2022}
}

@inproceedings{assran2023self,
  title={Self-supervised learning from images with a joint-embedding predictive architecture},
  author={Assran, Mahmoud and Duval, Quentin and Misra, Ishan and Bojanowski, Piotr and Vincent, Pascal and Rabbat, Michael and LeCun, Yann and Ballas, Nicolas},
  booktitle={CVPR},
  year={2023}
}

@article{bardes2024vjepa,
  title={V-JEPA: Latent video prediction for visual representation learning},
  author={Bardes, Adrien and Garrido, Quentin and Ponce, Jean and Chen, Xinlei and Rabbat, Michael and LeCun, Yann and Assran, Mahmoud and Ballas, Nicolas},
  journal={arXiv preprint arXiv:2402.04252},
  year={2024}
}

@inproceedings{ferns2004metrics,
  title={Metrics for finite {Markov} decision processes},
  author={Ferns, Norm and Panangaden, Prakash and Precup, Doina},
  booktitle={UAI},
  year={2004}
}

@inproceedings{castro2020scalable,
  title={Scalable methods for computing state similarity in deterministic {Markov} decision processes},
  author={Castro, Pablo Samuel},
  booktitle={AAAI},
  year={2020}
}

@inproceedings{schwarzer2021dataefficient,
  title={Data-efficient reinforcement learning with self-predictive representations},
  author={Schwarzer, Max and Anand, Ankesh and Goel, Rishab and Hjelm, R Devon and Courville, Aaron and Bachman, Philip},
  booktitle={ICLR},
  year={2021}
}

@inproceedings{
    cui2024dynamo,
    title={DynaMo: In-Domain Dynamics Pretraining for Visuo-Motor Control},
    author={Zichen Jeff Cui and Hengkai Pan and Aadhithya Iyer and Siddhant Haldar and Lerrel Pinto},
    booktitle={The Thirty-eighth Annual Conference on Neural Information Processing Systems},
    year={2024},
    url={https://openreview.net/forum?id=vUrOuc6NR3}
}

@inproceedings{laskin2020curl,
  title={Curl: Contrastive unsupervised representations for reinforcement learning},
  author={Laskin, Michael and Srinivas, Aravind and Abbeel, Pieter},
  booktitle={International conference on machine learning},
  pages={5639--5650},
  year={2020},
  organization={PMLR}
}

@inproceedings{yarats2021reinforcement,
  title={Reinforcement learning with prototypical representations},
  author={Yarats, Denis and Fergus, Rob and Lazaric, Alessandro and Pinto, Lerrel},
  booktitle={International Conference on Machine Learning},
  pages={11920--11931},
  year={2021},
  organization={PMLR}
}

@article{kostrikov2020image,
  title={Image augmentation is all you need: Regularizing deep reinforcement learning from pixels},
  author={Kostrikov, Ilya and Yarats, Denis and Fergus, Rob},
  journal={arXiv preprint arXiv:2004.13649},
  year={2020}
}

@inproceedings{
    schmidt2024learning,
    title={Learning to Act without Actions},
    author={Dominik Schmidt and Minqi Jiang},
    booktitle={The Twelfth International Conference on Learning Representations},
    year={2024},
    url={https://openreview.net/forum?id=rvUq3cxpDF}
}

@article{rubinstein1997optimization,
  title={Optimization of computer simulation models with rare events},
  author={Rubinstein, Reuven Y},
  journal={European Journal of Operational Research},
  volume={99},
  number={1},
  pages={89--112},
  year={1997},
  publisher={Elsevier}
}

@article{maes2026stable,
  title={stable-worldmodel-v1: Reproducible world modeling research and evaluation},
  author={Maes, Lucas and Lidec, Quentin Le and Haramati, Dan and Massaudi, Nassim and Scieur, Damien and LeCun, Yann and Balestriero, Randall},
  journal={arXiv preprint arXiv:2602.08968},
  year={2026}
}

@inproceedings{sobal2025pldm,
  title         = {Learning from Reward-Free Offline Data: A Case for Planning with Latent Dynamics Models},
  author        = {Sobal, Vlad and Zhang, Wancong and Cho, Kyunghyun and Balestriero, Randall and Rudner, Tim G. J. and LeCun, Yann},
  booktitle     = {Advances in Neural Information Processing Systems 38 (NeurIPS 2025)},
  year          = {2025},
  url           = {https://neurips.cc/virtual/2025/poster/116649}
}

@inproceedings{
sobal2025stresstesting,
title={Stress-Testing Offline Reward-Free Reinforcement Learning: A Case for Planning with Latent Dynamics Models},
author={Vlad Sobal and Wancong Zhang and Kyunghyun Cho and Randall Balestriero and Tim G. J. Rudner and Yann LeCun},
booktitle={7th Robot Learning Workshop: Towards Robots with Human-Level Abilities},
year={2025},
url={https://openreview.net/forum?id=jON7H6A9UU}
}

@inproceedings{
park2025ogbench,
title={{OGB}ench: Benchmarking Offline Goal-Conditioned {RL}},
author={Seohong Park and Kevin Frans and Benjamin Eysenbach and Sergey Levine},
booktitle={The Thirteenth International Conference on Learning Representations},
year={2025},
url={https://openreview.net/forum?id=M992mjgKzI}
}

@article{tassa2018deepmind,
  title={Deepmind control suite},
  author={Tassa, Yuval and Doron, Yotam and Muldal, Alistair and Erez, Tom and Li, Yazhe and Casas, Diego de Las and Budden, David and Abdolmaleki, Abbas and Merel, Josh and Lefrancq, Andrew and others},
  journal={arXiv preprint arXiv:1801.00690},
  year={2018}
}

@inproceedings{haarnoja2018soft,
  title={Soft actor-critic: Off-policy maximum entropy deep reinforcement learning with a stochastic actor},
  author={Haarnoja, Tuomas and Zhou, Aurick and Abbeel, Pieter and Levine, Sergey},
  booktitle={International conference on machine learning},
  pages={1861--1870},
  year={2018},
  organization={PMLR}
}

@article{maes2026leworldmodel,
  title         = {{LeWorldModel}: Stable End-to-End Joint-Embedding Predictive Architecture from Pixels},
  author        = {Maes, Lucas and Le Lidec, Quentin and Scieur, Damien and LeCun, Yann and Balestriero, Randall},
  journal       = {arXiv preprint arXiv:2603.19312},
  year          = {2026},
  url           = {https://arxiv.org/abs/2603.19312}
}

@article{balestriero2025lejepa,
  title         = {{LeJEPA}: Provable and Scalable Self-Supervised Learning Without the Heuristics},
  author        = {Balestriero, Randall and LeCun, Yann},
  journal       = {arXiv preprint arXiv:2511.08544},
  year          = {2025},
  url           = {https://arxiv.org/abs/2511.08544}
}

@inproceedings{zhou2025dinowm,
  title         = {{DINO-WM}: World Models on Pre-trained Visual Features Enable Zero-Shot Planning},
  author        = {Zhou, Gaoyue and Pan, Hengkai and LeCun, Yann and Pinto, Lerrel},
  booktitle     = {Proceedings of the 42nd International Conference on Machine Learning (ICML 2025)},
  series        = {Proceedings of Machine Learning Research},
  volume        = {267},
  pages         = {79115--79135},
  year          = {2025},
  publisher     = {PMLR},
  url           = {https://proceedings.mlr.press/v267/zhou25t.html}
}

@article{assran2025vjepa2,
  title         = {{V-JEPA 2}: Self-Supervised Video Models Enable Understanding, Prediction and Planning},
  author        = {Assran, Mahmoud and Bardes, Adrien and Fan, David and Garrido, Quentin and Howes, Russell and Komeili, Mojtaba and Muckley, Matthew and Rizvi, Ammar and Roberts, Claire and Sinha, Koustuv and Zholus, Artem and Arnaud, Sergio and Gejji, Abha and Martin, Ada and Robert Hogan, Francois and Dugas, Daniel and Bojanowski, Piotr and Khalidov, Vasil and Labatut, Patrick and Massa, Francisco and Szafraniec, Marc and Krishnakumar, Kapil and Li, Yong and Ma, Xiaodong and Chandar, Sarath and Meier, Franziska and LeCun, Yann and Rabbat, Michael and Ballas, Nicolas},
  journal       = {arXiv preprint arXiv:2506.09985},
  year          = {2025},
  url           = {https://arxiv.org/abs/2506.09985}
}

@article{han2026wam,
  title         = {Enhancing Policy Learning with World-Action Model},
  author        = {Han, Yuci and Yilmaz, Alper},
  journal       = {arXiv preprint arXiv:2603.28955},
  year          = {2026},
  url           = {https://arxiv.org/abs/2603.28955}
}

@article{terver2026ebjepa,
  title         = {A Lightweight Library for Energy-Based Joint-Embedding Predictive Architectures},
  author        = {Terver, Basile and Balestriero, Randall and Dervishi, Megi and Fan, David and Garrido, Quentin and Nagarajan, Tushar and Sinha, Koustuv and Zhang, Wancong and Rabbat, Mike and LeCun, Yann and Bar, Amir},
  journal       = {arXiv preprint arXiv:2602.03604},
  year          = {2026},
  url           = {https://arxiv.org/abs/2602.03604}
}

@inproceedings{
yu2025auxjepa,
title={Why and How Auxiliary Tasks Improve {JEPA} Representations},
author={Jiacan Yu and Siyi Chen and Mingrui Liu and Nono Horiuchi and Vladimir Braverman and Zicheng Xu and Dan Haramati and Randall Balestriero},
booktitle={UniReps: 3rd Edition of the Workshop on Unifying Representations in Neural Models},
year={2025},
url={https://openreview.net/forum?id=ZVx4SdKhlc}
}

@inproceedings{brandfonbrener2023inverse,
  title         = {Inverse Dynamics Pretraining Learns Good Representations for Multitask Imitation},
  author        = {Brandfonbrener, David and Nachum, Ofir and Bruna, Joan},
  booktitle     = {Advances in Neural Information Processing Systems 36 (NeurIPS 2023)},
  year          = {2023},
  url           = {https://proceedings.neurips.cc/paper_files/paper/2023/hash/d36dfcdb14473a8526111c221660f2ab-Abstract-Conference.html}
}

@article{levine2024multistep,
  title         = {Multistep Inverse Is Not All You Need},
  author        = {Levine, Alexander and Stone, Peter and Zhang, Amy},
  journal       = {Reinforcement Learning Journal},
  volume        = {2},
  pages         = {884--925},
  year          = {2024},
  note          = {Presented at the Reinforcement Learning Conference (RLC 2024)},
  url           = {https://rlj.cs.umass.edu/2024/papers/Paper117.html}
}

@inproceedings{efroni2022provable,
  title         = {Provably Filtering Exogenous Distractors using Multistep Inverse Dynamics},
  author        = {Efroni, Yonathan and Misra, Dipendra and Krishnamurthy, Akshay and Agarwal, Alekh and Langford, John},
  booktitle     = {International Conference on Learning Representations (ICLR 2022)},
  year          = {2022},
  url           = {https://openreview.net/forum?id=RQLLzMCefQu}
}

@inproceedings{mhammedi2023musik,
  title         = {Representation Learning with Multi-Step Inverse Kinematics: An Efficient and Optimal Approach to Rich-Observation {RL}},
  author        = {Mhammedi, Zakaria and Foster, Dylan J. and Rakhlin, Alexander},
  booktitle     = {Proceedings of the 40th International Conference on Machine Learning (ICML 2023)},
  series        = {Proceedings of Machine Learning Research},
  volume        = {202},
  pages         = {24659--24700},
  year          = {2023},
  publisher     = {PMLR},
  note          = {Oral presentation},
  url           = {https://proceedings.mlr.press/v202/mhammedi23a.html}
}

@inproceedings{seo2022apv,
  title         = {Reinforcement Learning with Action-Free Pre-Training from Videos},
  author        = {Seo, Younggyo and Lee, Kimin and James, Stephen L. and Abbeel, Pieter},
  booktitle     = {Proceedings of the 39th International Conference on Machine Learning (ICML 2022)},
  series        = {Proceedings of Machine Learning Research},
  volume        = {162},
  pages         = {19561--19579},
  year          = {2022},
  publisher     = {PMLR},
  url           = {https://proceedings.mlr.press/v162/seo22a.html}
}

@article{Goodale-Milner,
  author  = {Goodale, Melvyn A. and Milner, A. David},
  title   = {Separate Visual Pathways for Perception and Action},
  journal = {Trends in Neurosciences},
  volume  = {15},
  number  = {1},
  pages   = {20--25},
  year    = {1992}
}

@article{OReganNoe2001,
  author  = {O'Regan, J. Kevin and No{\"e}, Alva},
  title   = {A Sensorimotor Account of Vision and Visual Consciousness},
  journal = {Behavioral and Brain Sciences},
  volume  = {24},
  number  = {5},
  pages   = {939--1031},
  year    = {2001}
}

@incollection{Prinz1990,
  author    = {Prinz, Wolfgang},
  title     = {A Common Coding Approach to Perception and Action},
  booktitle = {Relationships Between Perception and Action: Current Approaches},
  editor    = {Neumann, Odmar and Prinz, Wolfgang},
  publisher = {Springer},
  address   = {Berlin},
  pages     = {167--201},
  year      = {1990}
}

@book{VarelaThompsonRosch1991,
  author    = {Varela, Francisco J. and Rosch, Eleanor and Thompson, Evan },
  title     = {The Embodied Mind: Cognitive Science and Human Experience},
  publisher = {MIT Press},
  address   = {Cambridge, MA},
  year      = {1991}
}

@book{Gibson1979,
	author = {James J. Gibson},
	publisher = {Houghton Mifflin},
	title = {The Ecological Approach to Visual Perception},
	year = {1979}
}

@book{Uexkull1934,
  author    = {von Uexk{\"u}ll, Jakob},
  title     = {Streifz{\"u}ge durch die Umwelten von Tieren und Menschen: Ein Bilderbuch unsichtbarer Welten},
  series    = {Verst{\"a}ndliche Wissenschaft},
  volume    = {21},
  publisher = {Verlag von Julius Springer},
  address   = {Berlin},
  year      = {1934}
}

@article{Scholkopfetal21,
  title = {Toward Causal Representation Learning},
  journal = {Proceedings of the IEEE},
  volume = {109},
  number = {5},
  pages = {612--634},
  year = {2021},
  author = {Sch{\"o}lkopf, B. and Locatello, F. and Bauer, S. and Ke, N. R. and Kalchbrenner, N. and Goyal, A. and Bengio, Y.}
}

@InProceedings{Rubensteinetal17,
  title = {Causal Consistency of Structural Equation Models},
  author = {Rubenstein*, P. K. and Weichwald*, S. and Bongers, S. and Mooij, J. M. and Janzing, D. and Grosse-Wentrup, M. and Sch{\"o}lkopf, B.},
  booktitle = {Proceedings of the Thirty-Third Conference on Uncertainty in Artificial Intelligence (UAI)},
  editors = {Gal Elidan, Kristian Kersting, and Alexander T. Ihler},
  year = {2017},
  url = {http://auai.org/uai2017/proceedings/papers/11.pdf}
}

@book{Hertz1899,
  author    = {Heinrich Hertz},
  title     = {The Principles of Mechanics Presented in a New Form},
  year      = {1899},
  publisher = {Macmillan},
  address   = {London}
}

@article{DBLP:journals/corr/abs-1911-10500,
  author       = {Bernhard Sch{\"{o}}lkopf},
  title        = {Causality for Machine Learning},
  url          = {http://arxiv.org/abs/1911.10500},
  eprinttype   = {arXiv},
  eprint       = {1911.10500},
  year    = {2019},
  note = {Published in: Probabilistic and Causal Inference: The Works of Judea Pearl}
}

@InProceedings{pmlr-v235-park24c,
   title = {The Linear Representation Hypothesis and the Geometry of Large Language Models},
   author = {Park, Kiho and Choe, Yo Joong and Veitch, Victor},
   booktitle = {Proceedings of the 41st International Conference on Machine Learning},
   pages = {39643--39666},
   year = {2024},
   volume = {235},
   series = {Proceedings of Machine Learning Research},
   publisher = {PMLR}
 }

@inproceedings{RavindranBarto,
  author    = {Balaraman Ravindran and Andrew G. Barto},
  title     = {SMDP Homomorphisms: An Algebraic Approach to Abstraction in Semi-Markov Decision Processes},
  booktitle = {International Joint Conference on Artificial Intelligence (IJCAI)},
  pages      = {1011--1016},
  year       = {2003}
}

@conference{Keurtietal23,
  title = {Homomorphism {A}uto{E}ncoder --- Learning Group Structured Representations from Observed Transitions},
  author = {Keurti, H. and Pan, H.-R. and Besserve, M. and Grewe, B. F. and Sch{\"o}lkopf, B.},
  booktitle = {Proceedings of the 40th International Conference on Machine Learning},
  volume = {202},
  pages = {16190--16215},
  series = {Proceedings of Machine Learning Research},
  editors = {A. Krause, E. Brunskill, K. Cho, B. Engelhardt, S. Sabato and J. Scarlett},
  publisher = {PMLR},
  year = {2023},
  event_place = {Honolulu, Hawaii, USA},
  url = {https://proceedings.mlr.press/v202/keurti23a.html},
  month_numeric = {7}
}
\bibliographystyle{unsrtnat}


\appendix

\section{Implementation details}\label{sec:implementation}

\subsection{Training objective}
\label{app:training_objective}

\cref{alg:loss} gives PyTorch-style pseudocode for the mini-batch objective used to train SMWM. The encoder receives gradients from both the forward prediction loss and the inverse dynamics loss, which is what prevents representation collapse.

\begin{algorithm}[tbh]
\caption{Mini-batch loss for SMWM training.}
\label{alg:loss}
\begin{lstlisting}[style=pytorch]
def loss(o_t, a_t, o_tp1, lambda_inv=10.0):
    """
    o_t, o_tp1: (B, C, H, W)  consecutive pixel observations
    a_t:        (B, A)        action taken between them
    lambda_inv: (float)       inverse dynamics loss weight
    """
    z_t   = encoder(o_t)                    # (B, D)
    z_tp1 = encoder(o_tp1)                  # (B, D)

    z_hat = fwd_model(z_t, a_t)             # predict next embedding
    a_hat = inv_model(z_t, z_tp1)           # predict intervening action

    loss_fwd = F.mse_loss(z_hat, z_tp1)     # forward dynamics loss
    loss_inv = F.mse_loss(a_hat, a_t)       # inverse dynamics loss

    return loss_fwd + lambda_inv * loss_inv
\end{lstlisting}
\end{algorithm}

\subsection{Architecture}\label{sec:architecture}

We adopt the encoder and forward model of LeWorldModel~\citep{maes2026leworldmodel} essentially unchanged, and add a small inverse head \(h_\psi\). Inputs are \(224\times 224\) RGB images.

\paragraph{Encoder \(f_\theta\).} A Vision Transformer Tiny (ViT-Tiny) with patch size \(14\), instantiated from the Hugging Face library. The final \textsc{cls} token is projected to \(z \in \mathbb{R}^d\) with \(d = 192\); the encoder has \(\approx 5\)M parameters.

\paragraph{Forward model \(g_\phi\).} A ViT-S backbone with learned positional embeddings and causal masking over the observation history, matching the LeWM predictor; the action \(a_t\) is injected via AdaLN-zero. However, in contrast to LeWM, we set the history length to \(1\) on \emph{every} environment, so \(g_\phi\) predicts \(\hat z_{t+1}\) from a single \((z_t, a_t)\) pair, as described in \cref{sec:method}. LeWM instead uses history \(3\) on Push-T and OGBench-Cube. The forward model has \(\approx 10\)M parameters.

\paragraph{Inverse model \(h_\psi\).} Our only architectural addition. A 2-layer MLP of width \(256\) with ReLU activations, mapping \([z_t;\, z_{t+1}] \in \mathbb{R}^{2d}\) to \(\hat a_t \in \mathbb{R}^m\), without output activation. Approximately \(0.1\)M parameters.

\subsection{Optimization details}\label{sec:optimization}

All world models are trained end-to-end with the objective in \cref{eq:total-loss}. We use the same optimization settings for SMWM, the forward-only ablation, and the SIGReg baseline; only the regularization term is changed. For SMWM, the inverse-dynamics weight $\lambda$ is environment-specific. The values are listed in \cref{tab:lambda_values}.

\begin{table}[tbh]
    \centering
    \caption{\textbf{Inverse-dynamics weights.} Values of $\lambda$ in $\mathcal{L}_{\text{fwd}}+\lambda\mathcal{L}_{\text{inv}}$.}
    \label{tab:lambda_values}
    \begin{tabular}{lc}
        \toprule
        Environment & $\lambda$ \\
        \midrule
        TwoRoom & $0.1$ \\
        Reacher & $5$ \\
        Push-T & $30$ \\
        OGBench-Cube & $1$ \\
        \bottomrule
    \end{tabular}
\end{table}

The remaining training hyperparameters are shared across environments unless explicitly noted in \cref{tab:optimization_hyperparameters}. We train on two-frame snippets with a single-step forward target. Since datasets are loaded with frameskip $5$, the action encoder and inverse head receive the concatenated action over the skipped interval. The SIGReg comparison uses the same optimizer and architecture, with $\lambda=0$ and SIGReg weight $0.09$.

\begin{table}[tbh]
    \centering
    \caption{\textbf{Optimization hyperparameters.}}
    \label{tab:optimization_hyperparameters}
    \small
    \begin{tabular}{lc}
        \toprule
        Hyperparameter & Value \\
        \midrule
        Optimizer & AdamW \\
        Learning rate & $10^{-4}$ \\
        Weight decay & $10^{-3}$ \\
        Batch size & $256$ \\
        Training epochs & $10$ \\
        Gradient clipping & $1.0$ \\
        \bottomrule
    \end{tabular}
\end{table}

\paragraph{Choosing $\lambda$.}
\Cref{fig:lambda_sweep_planning} shows that $\lambda$ requires some environment-level tuning. For example, $\lambda=0.1$ is optimal for TwoRoom but causes Reacher to collapse to near-random planning performance. The final values in \cref{tab:lambda_values} were chosen from this broad sweep. The sweep contains one trained model per $\lambda$ value, so the shaded bands show binomial standard error over the 100 evaluation episodes rather than variation across training seeds.

\begin{figure}[tbh]
    \centering
    \includegraphics[width=0.95\linewidth]{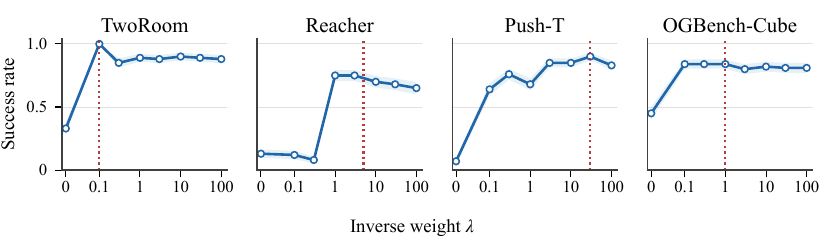}
    \caption{\textbf{Sensitivity to inverse-dynamics weight.} Goal-conditioned planning success rate as a function of the inverse-dynamics loss weight $\lambda$ at goal offset 25. Each panel corresponds to one environment, and the red dotted line marks the value used for the main paper experiments.}
    \label{fig:lambda_sweep_planning}
\end{figure}

\subsection{Planning protocol}\label{planning_protocol}

For each evaluation episode, the policy receives the current observation \(o_t\) and a goal observation \(o_g\), encodes them as \(z_t=f_\theta(o_t)\) and \(z_g=f_\theta(o_g)\), and optimizes candidate action sequences with CEM under the terminal cost in \cref{eq:planning-cost}. Candidate sequences are rolled out using \(g_\phi\) for \(H\) latent model steps, where each model-step action is a block of five primitive environment actions.

Planning uses the same MPC and CEM hyperparameters in all four environments (\cref{tab:planning_protocol}). We sample 100 start states without replacement from the held-out evaluation dataset, requiring that the same episode contains a goal frame 25 steps later. The environment is reset to the sampled start state, the target is set from the goal row, and success is measured over a budget of 50 primitive environment steps. The sampled tasks are fixed across methods and seeds within each environment.

\begin{table}[tbh]
    \centering
    \caption{\textbf{Planning hyperparameters.}}
    \label{tab:planning_protocol}
    \small
    \begin{tabular}{@{}ll@{}}
        \toprule
        Hyperparameter & Value \\
        \midrule
        Planning horizon \(H\) & 5 model steps \\
        Executed steps \(K\) & 5 model steps \\
        Action block & 5 primitive actions \\
        CEM population & 300 \\
        CEM iterations & 30 \\
        CEM elites & 30 \\
        Initial CEM variance scale & 1.0 \\
        Evaluation tasks & 100 \\
        Goal offset & 25 primitive steps \\
        Evaluation budget & 50 primitive steps \\
        \bottomrule
    \end{tabular}
\end{table}

\subsection{Probing protocol.}
For \cref{tab:physical_quantity_probe_r2}, we freeze \(f_\theta\), encode embeddings \(z=f_\theta(o)\), and train probes on \(25{,}000\) training embeddings per environment and method, reporting held-out \(R^2\) on a disjoint \(5{,}000\)-sample validation split. The linear probe is ridge regression with \(\alpha=10^{-3}\). The nonlinear probe is a two-layer MLP with widths \(256,128\) and ReLU activations. For vector-valued targets, including sine--cosine angle encodings, we report uniformly averaged \(R^2\) over target coordinates.

\subsection{Compute.}
All training and planning runs used a single NVIDIA H100 GPU. Each world model was trained in under 4 hours.

\section{Environments and datasets}\label{app:environments}

Our experiments use the four environments and offline datasets adopted by \citet{maes2026leworldmodel} (\cref{fig:environments}). All models are trained and evaluated through the \texttt{stable-worldmodel} codebase~\citep{maes2026stable}, which standardizes the environment interfaces, dataset loading, and downstream planning procedure.

For every environment, we use an episode-level \(90/10\) train/validation split: \(f_\theta\), \(g_\phi\), and \(h_\psi\) are trained on the training episodes, while all reported metrics---planning (\cref{fig:main-planning,fig:planning_horizon_sweep}), probes (\cref{tab:physical_quantity_probe_r2}), and latent-geometry analyses---are computed on the held-out validation split.

\paragraph{(a) TwoRoom.} TwoRoom~\citep{sobal2025pldm, sobal2025stresstesting} is a 2D point-navigation task with a simple constrained geometry: the agent starts in one room and the goal lies in the other, with a single doorway as the only passage between them. The script moves the agent first toward the doorway and then toward the goal after crossing the wall. The dataset contains 10{,}000 noisy scripted rollouts, averaging 92 steps.

\paragraph{(b) Push-T.}
Push-T is a 2D contact-rich manipulation task in which a dot-shaped agent must push a T-shaped object into a target pose. We use the DINO-WM expert demonstrations~\citep{zhou2025dinowm}, consisting of 20{,}000 episodes with mean length 196.

\paragraph{(c) OGBench-Cube.}
OGBench-Cube~\citep{park2025ogbench} is a tabletop manipulation task where a robot gripper must move a cube to a specified target pose. We use the single-cube setting and the benchmark's scripted dataset: 10{,}000 trajectories, each of length 200.

\paragraph{(d) Reacher.}
Reacher is the two-link arm task from the DeepMind Control Suite~\citep{tassa2018deepmind}. We follow the DINO-WM variant, where success is defined by matching the target joint configuration, not just by end-effector proximity. The offline dataset contains 10{,}000 SAC-generated episodes~\citep{haarnoja2018soft}, each 200 steps long.

\begin{figure}[tbh]
    \centering
    \includegraphics[width=0.95\linewidth]{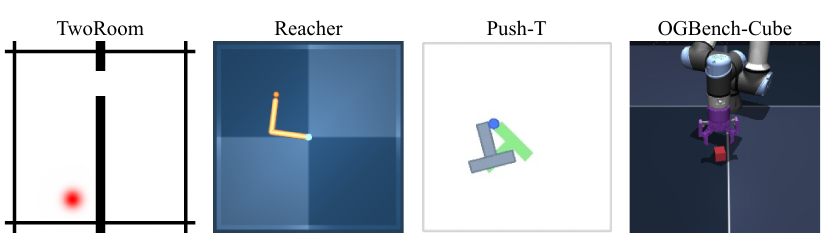}
    \caption{\textbf{Environments.} Four evaluation environments---TwoRoom (2D navigation), Reacher (continuous control), Push-T (2D contact-rich manipulation), and OGBench-Cube (3D tabletop manipulation).}
    \label{fig:environments}
\end{figure}

\section{Planning performance on extended goal offsets} 

\Cref{fig:planning_horizon_sweep} evaluates whether the learned latent dynamics remain useful when the goal is placed farther from the initial state. For each goal offset, the start and goal observations are sampled from the held-out evaluation split, and the planner is given a budget of \(2\times\) the offset in primitive environment steps. Thus the evaluation becomes harder because the model must support longer latent rollouts. The same trained models and MPC/CEM planner from \cref{fig:main-planning} are used throughout.

SMWM is stable across longer horizons on TwoRoom and OGBench-Cube, where SIGReg either degrades sharply or remains consistently lower. On Reacher, the inverse and SIGReg curves stay close over the tested offsets. Push-T is the main failure case: both regularized methods drop as the offset increases.
\begin{figure}[tbh]
    \centering
    \includegraphics[width=0.95\linewidth]{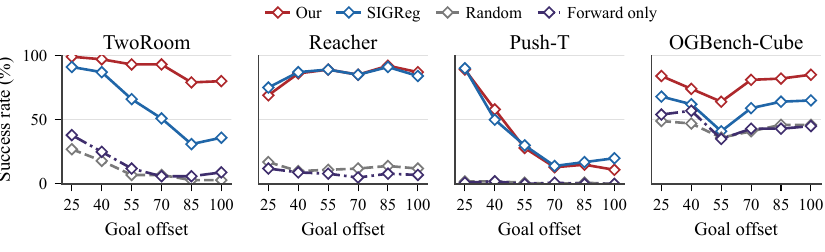}
     \caption{\textbf{Robustness to planning horizon.} Goal-conditioned planning success rate as a function of \emph{goal offset}, the number of environment steps between the initial and goal observations; the planner's evaluation budget is fixed at $2\times$ the goal offset. Methods and environments match \cref{fig:main-planning}. SMWM maintains near-flat success across horizons on TwoRoom and OGBench-Cube, while SIGReg either degrades sharply (TwoRoom) or trails consistently (Cube). The two regularized methods are comparable on Reacher and Push-T.
     }
    \label{fig:planning_horizon_sweep}
\end{figure}

\section{SIGReg latent geometry}\label{app:sigreg_geometry}

\Cref{fig:sigreg_environment_embeddings} repeats the representation summary of \cref{fig:environment_embeddings} for the SIGReg baseline. The same held-out embeddings and PCA protocol are used, with the bottom row showing \((\mathrm{PC}_1,\mathrm{PC}_2)\) for each environment. Unlike SMWM embeddings, the SIGReg spectra do not exhibit a clear elbow. This is expected: the SIGReg regularizer explicitly encourages the embedding distribution to match an isotropic Gaussian, spreading variance across many principal components rather than concentrating it in a compact task-relevant subspace. As a result, although state information remains recoverable by probes, the resulting representation is not low-dimensional, and is less directly interpretable than the SMWM representation.

\begin{figure}[tbh]
    \centering
    \includegraphics[width=0.95\linewidth]{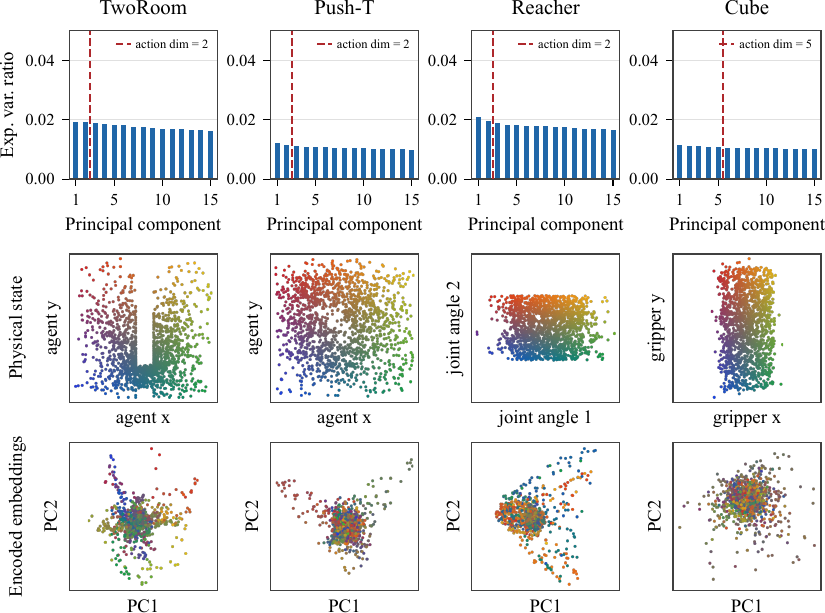}
    \caption{\textbf{Latent geometry of SIGReg embeddings.} For each environment we show the PCA spectrum of held-out embeddings (top), the distribution of a representative ground-truth quantity in physical state space (middle), and the first two principal components of the encoded embeddings, color-coded by the same physical quantity (bottom). The dashed red lines mark the action dimension. Compared with SMWM embeddings in \cref{fig:environment_embeddings}, SIGReg produces less compact and less structured latent geometry: the spectra lack a sharp elbow, and the state-dependent structure is spread across many components rather than appearing in a small interpretable PC subspace.}
    \label{fig:sigreg_environment_embeddings}
\end{figure}

\section{Dot world details}\label{app:dotworld}

\subsection{Environment}\label{app:dotworld:env}

Each observation is a $64{\times}64$ RGB image rendered on a white canvas: each dot is drawn as a filled disc of radius $r = 2$ pixels in a fixed RGB color. Integer dot positions are sampled uniformly in $[m, 64-1-m]^2$ with a margin of $m = r + \Delta_{\max} = 18$, and integer per-step displacements in $[-\Delta_{\max}, \Delta_{\max}]^2$ with $\Delta_{\max} = 16$, so every sampled transition stays inside the canvas. Rejection sampling enforces a minimum center-to-center separation of $2r+1 = 5$ pixels at both endpoints. Each $(o_t, a_t, o_{t+1})$ is generated lazily from a seeded RNG, making the dataset deterministic and reproducible.

The four configurations of \cref{fig:dotworld-multidot} combine three motion types: Independent dots have their own $(\Delta x, \Delta y)$ in the action vector, coupled dot pairs share a single $(\Delta x, \Delta y)$, and random dots move at every step without contributing to the action vector. Concretely, \textbf{Independent} = 2 independent dots ($|a|=4$); \textbf{Coupled} = 1 coupled pair ($|a|=2$); \textbf{Distractor} = 1 independent dot + 1 random dot ($|a|=2$); \textbf{Combined} = 1 coupled pair + 2 independent dots + 1 random dot ($|a|=6$).

\subsection{Architecture and training}\label{app:dotworld:train}

The encoder $f_\theta$ stacks three stride-2 convolutions ($3{\to}32{\to}64{\to}128$ channels, $3{\times}3$ kernels) interleaved with ReLU, followed by a $2{\times}2$ max-pool, a flatten, a linear projection to $\mathbb{R}^d$, and a final $\tanh$ that bounds embeddings in $[-1, 1]^d$ with $d = 64$. Both $g_\phi$ and $h_\psi$ are 2-hidden-layer MLPs of width $256$ with ReLU activations; $g_\phi$ ends in $\tanh$ to match the encoder range, while $h_\psi$ has no output activation. Approximate parameter counts: $f_\theta \approx 220$k, $g_\phi \approx 100$k, $h_\psi \approx 100$k.

All three networks are trained jointly with Adam (lr $= 5{\times}10^{-4}$, default $\beta$, no weight decay, no LR schedule) for 100 epochs at batch size 256 on 250{,}000 transitions, with seed 42 and $\lambda = 10$. Actions are normalized by $\Delta_{\max}$ before being passed to $g_\phi$ and predicted by $h_\psi$, so all action coordinates lie in $[-1, 1]$; we report $\mathcal{L}_{\text{inv}}$ on these normalized actions. All metrics are computed on a held-out evaluation split of $1{,}000$ transitions sampled with a disjoint seed.

\subsection{Collapse without the inverse loss}

To verify that the inverse loss is the active anti-collapse mechanism on dot world, we rerun the single-dot configuration with $\lambda = 0$ and otherwise identical hyperparameters. \Cref{fig:dotworld-collapse} reports the resulting representation using the same probe grid as \cref{fig:dotworld-pca}. Without the inverse loss, the encoder collapses all probe observations to essentially the same embedding. Consequently, there is no state-dependent variation for PCA to expose. The forward loss alone can therefore be minimized by a constant, trivially predictable representation.

\begin{figure}[tbh]
    \centering
    \includegraphics[width=0.95\linewidth]{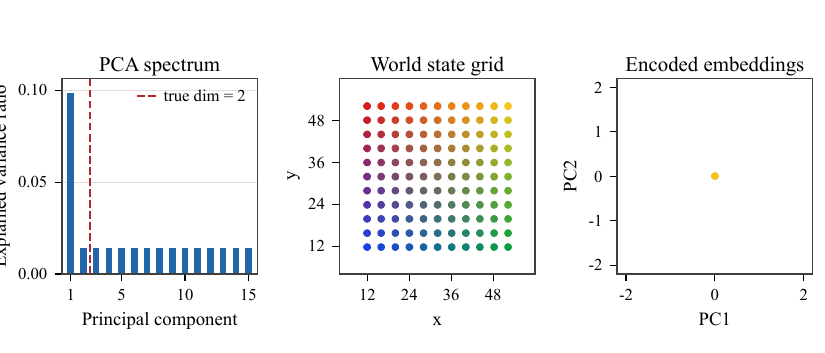}
    \caption{\textbf{Forward-only collapse on dot world.} The single-dot model is trained with $\lambda = 0$ and evaluated with the same probe grid and visualization protocol as \cref{fig:dotworld-pca}. Removing the inverse loss maps the probe states to the same embedding, so the PC projection collapses to a single point at the origin.}
    \label{fig:dotworld-collapse}
\end{figure}

\section{Sprite world reconstruction experiment}
\label{app:spriteworld}

\subsection{Environment and control settings}
\label{app:spriteworld:env}

To make the control-dependent nature of the learned representation visually explicit, we introduce a second toy environment with a single triangular agent moving in a 2D plane. The agent is rendered as an asymmetric sprite on a \(64{\times}64\) RGB canvas. Its state is its pose,
\[
    s = (x,y,\theta),
\]
where \(x,y\) denote position and \(\theta\) denotes orientation. Because the triangle is asymmetric, its orientation is visible from a single frame.

The key intervention in this experiment is to vary the action repertoire while keeping the underlying visual world fixed. In all settings, the agent can move in all three degrees of freedom: \(x\), \(y\), and \(\theta\). The difference is only which changes are exposed to the model as actions. Uncontrolled degrees of freedom are not held fixed; they continue to move randomly.

We consider the following control settings:
\begin{itemize}
    \item \textbf{No control:} the agent translates and rotates randomly, and the action vector is empty.
    \item \textbf{\(x/y\) control:} the action is \(a_t=(\Delta x,\Delta y)\), while rotations remain random and uncontrolled.
    \item \textbf{\(x/y/\theta\) control:} the action is \(a_t=(\Delta x,\Delta y,\Delta\theta)\), so both translation and rotation are controlled.
\end{itemize}

Thus the same triangular agent is observed under different action interfaces. This lets us ask whether the representation changes depending on which aspects of the agent's pose are controllable.

\subsection{Training and reconstruction probe}
\label{app:spriteworld:decoder}

For each control setting, we train the same inverse-dynamics-regularized world model as in the dot-world experiments: a CNN encoder \(f_\theta\), a forward model \(g_\phi\), and an inverse model \(h_\psi\), optimized with the objective in \cref{eq:total-loss}. The architecture and training procedure match \cref{app:dotworld:train}. In the no-control setting, the action dimension is zero, so there is no inverse-dynamics signal and the model reduces to the forward-only objective.

After training, we freeze the encoder and train a decoder as a post-hoc probe,
\[
    \hat{o}_t = d_\xi(f_\theta(o_t)).
\]
The decoder is trained only to visualize what information is already present in the frozen embedding \(z_t=f_\theta(o_t)\); it is not used to train the representation.

\begin{figure}[tbh]
    \centering
    \includegraphics[width=0.95\linewidth]{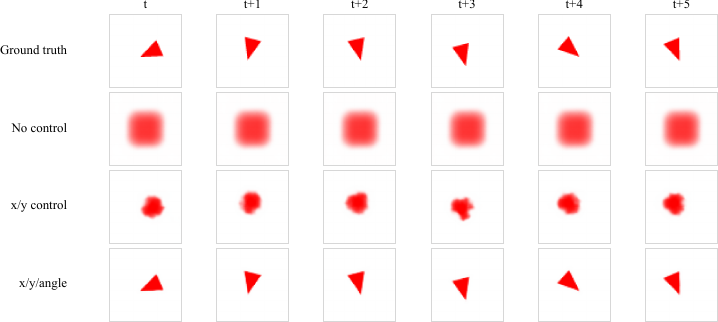}
    \caption{\textbf{Control-dependent reconstruction of a triangular agent.}
    The top row shows the ground-truth trajectory of an asymmetric triangular agent with pose \((x,y,\theta)\). The remaining rows show reconstructions from frozen embeddings learned under different action interfaces. With no control, the representation collapses and the decoder outputs an average occupancy pattern. With \(x/y\) control, the representation preserves position but averages over the uncontrolled orientation, producing a blurred, approximately symmetric blob at the correct location. With full \(x/y/\theta\) control, the representation preserves both position and orientation, yielding sharp oriented reconstructions.}
    \label{fig:sprite-control-reconstruction}
\end{figure}

\subsection{Interpretation}
\label{app:spriteworld:interpretation}

The experiment shows that the model ``sees'' the same visual object differently depending on what the agent can control. When there are no actions, there is no inverse signal that could force the encoder to preserve state information, so the representation collapses. When only translation is controlled, the encoder preserves where the agent is, because this is needed to recover and predict translational effects, but it discards orientation, because rotation is random and action-irrelevant. The decoder therefore averages over possible orientations. When rotation is also controlled, orientation becomes action-relevant, and the encoder preserves the full pose.

This provides a direct visual counterpart to the PCA results in \cref{fig:dotworld-multidot}: the learned representation retains controllable degrees of freedom and averages out uncontrolled ones.

\section{Equivariance and latent action composition}
\label{app:equivariance-composition}

This appendix gives the formal statement behind the compositional claim in \cref{sec:dotworld}. Denote the effect of action \(a\) on the observation \(o\) by \(o\mapsto a(o)\), and write \(g_a(z):=g_\phi(z,a)\). In the main text, the forward prediction loss encourages the approximate equivariance condition
\begin{equation}
    f(a(o)) \approx g_a(f(o)),
\end{equation}
on the support of the training distribution. Here, for clarity, we consider the exact version
\begin{equation}
    \label{eq:equiv-exact}
    f(a(o)) = g_a(f(o))
\end{equation}
for all \(a\in\mathcal A\) and \(o\in\mathcal O\). The encoder thus intertwines the physical intervention with the learned latent intervention, making this a statement of \emph{equivariance}. In physics terminology one might also say that the latent state transforms \emph{covariantly} with the intervention.

We consider sequences of actions, assuming that the actions form a semigroup under composition; i.e., for \(a_1,a_2\in\mathcal A\), the composition \(a_2\circ a_1\) is again in \(\mathcal A\), and composition is associative. We say that \(a\mapsto g_a\) is a homomorphism on the latent manifold \(f(\mathcal O)\) if, for every \(a_1,a_2\in\mathcal A\), we have
\begin{equation}
    g_{a_2\circ a_1}=g_{a_2}\circ g_{a_1}
    \qquad \text{on } f(\mathcal O),
\end{equation}
i.e., for every \(z\in f(\mathcal O)\),
\begin{equation}
    g_{a_2\circ a_1}(z)
    =
    g_{a_2}(g_{a_1}(z)).
\end{equation}

\begin{proof}[Proof of \cref{thm:equivariance-homomorphism}]
Choose any \(a_1,a_2\in\mathcal A\) and \(z\in f(\mathcal O)\). Choose \(o\in\mathcal O\) with \(f(o)=z\). By closure, \(a_3:=a_2\circ a_1\in\mathcal A\).

Applying equivariance three times,
\begin{align*}
    g_{a_2\circ a_1}(z)
        &= g_{a_3}\bigl(f(o)\bigr) \\
        &= f\bigl(a_3(o)\bigr)
            && \text{[equivariance at } (a_3,o)\text{]} \\
        &= f\bigl(a_2(a_1(o))\bigr) \\
        &= g_{a_2}\!\bigl(f(a_1(o))\bigr)
            && \text{[equivariance at } (a_2,a_1(o))\text{]} \\
        &= g_{a_2}\!\bigl(g_{a_1}(f(o))\bigr)
            && \text{[equivariance at } (a_1,o)\text{]} \\
        &= g_{a_2}\!\bigl(g_{a_1}(z)\bigr).
\end{align*}
Thus \(g_{a_2\circ a_1}(z)=g_{a_2}(g_{a_1}(z))\) for every \(z\in f(\mathcal O)\), so \(a\mapsto g_a\) is a homomorphism on \(f(\mathcal O)\).
\end{proof}

One-step equivariance thus forces physical action composition to be represented by composition of the learned latent interventions, at least on the encoded data manifold. In the learned model, this identity should be interpreted approximately and on-support. Note that the assumption of semigroup closure is nontrivial: in practice, it could happen that a multi-step action sequence is not equivalent to any single primitive action.\footnote{This case would correspond to using a free semigroup of action sequences.}\footnote{The homomorphism property established in Theorem~\ref{thm:equivariance-homomorphism} is closely related to the notion of MDP homomorphisms \citep{RavindranBarto}, where abstraction maps preserve the compositional structure of state-action transitions (cf.\ also \cite{Rubensteinetal17,Keurtietal23}). In our setting, however, the homomorphism emerges from approximate equivariance induced by the forward prediction objective, rather than being assumed as part of an abstraction framework. A related discussion can  be found in \cite{DBLP:journals/corr/abs-1911-10500}.}

Equivariance alone admits a wide range of solutions, including two trivial extremes:
\begin{enumerate}[leftmargin=*,itemsep=2pt,topsep=2pt]
    \item \textbf{No encoding.} Take \(\mathcal Z=\mathcal O\), \(f=\mathrm{id}\), and \(g_a=a\). Then both sides of \cref{eq:equiv-exact} equal \(a(o)\). This solution satisfies equivariance but achieves no compression.
    \item \textbf{Collapse.} Take \(\mathcal Z=\{z\}\), \(f\equiv z\), and \(g_a=\mathrm{id}\). Then both sides of \cref{eq:equiv-exact} equal \(z\). This solution satisfies equivariance but discards all information about \(a\).
\end{enumerate}
Useful representations therefore need more than equivariance: the latent dynamics should remain faithful to the physical action. In particular, if an action \(a\in\mathcal A\) changes observations nontrivially in \(\mathcal O\), it should also induce a nontrivial transformation on \(f(\mathcal O)\). A stronger requirement is latent action identifiability: whenever \(z'=g_a(z)\) for \((z,z')\in f(\mathcal O)\times f(\mathcal O)\), the action \(a\) should be recoverable from the latent transition \((z,z')\). This is exactly the constraint imposed by the inverse model, which predicts \(a\) from \((z,z')\).

Beyond a homomorphism into latent transformations under composition, the experiments in \cref{fig:dotworld-rollout} suggest that in our case the homomorphism may approximately go into the additive group, i.e., the learned interventions appear to act approximately as translations,
\begin{equation}
    g_a(z) \approx z+\rho(a),
\end{equation}
with \(\rho(a)\) approximately independent of \(z\). Here \(\rho(a)\) is the latent displacement, or latent effect vector, associated with action \(a\). Substituting this into the composition identity yields
\begin{equation}
    z+\rho(a_2\circ a_1)
    \approx
    g_{a_2}(g_{a_1}(z))
    \approx
    z+\rho(a_1)+\rho(a_2),
\end{equation}
and therefore
\begin{equation}
    \rho(a_2\circ a_1) \approx \rho(a_1)+\rho(a_2).
\end{equation}
For pure displacement actions, where composition is itself approximately additive away from boundaries, this means that the model has learned an approximately additive latent representation of interventions. This stronger property is empirical rather than imposed by the objective.\footnote{It need not hold for contact-rich dynamics, boundary effects, periodic variables, strongly state-dependent action effects, or noncommutative action laws.}

It is intriguing to ask where this translation-like structure may stem from. The inverse-dynamics loss provides a plausible bias toward this regime. Equivariance alone admits degenerate solutions, including collapse; useful representations additionally require the latent action to be identifiable from the transition. The inverse head enforces this by requiring \(a_t\) to be recoverable from \((z_t,z_{t+1})\), where \(z_t=f(o_t)\) and \(z_{t+1}=f(o_{t+1})\). A simple way to satisfy this constraint is to encode the action in the displacement vector
\begin{equation}
    z_{t+1}-z_t \approx \rho(a_t).
\end{equation}
If \(\rho\) is injective on the action support, then the inverse \(h\) can decode the action by applying an approximate inverse of \(\rho\) to this displacement. This mechanism does not enforce additivity, but it renders an additive, translation-like representation a low-complexity solution.

This perspective connects the empirical latent geometry to an interventional version of the linear representation hypothesis \citep{pmlr-v235-park24c}. The claim is stronger than ordinary linear decodability: it is not merely that actions can be decoded from latent states, but that applying an intervention corresponds approximately to moving along an action-dependent direction in latent space,
\begin{equation}
    f(a(o))-f(o) \approx \rho(a).
\end{equation}
Thus the learned representation appears to linearize controllable interventions: the encoder is approximately equivariant to action-induced transformations, and in simple displacement domains the learned latent interventions further specialize to additive translations.

\section{Extended related work}\label{app:extended}

This appendix expands the discussion of \cref{sec:related}. \cref{tab:detailed_comparison} summarizes the discussed methods along the dimensions most relevant to our contribution.

\begin{table*}[t]
\centering
\caption{\textbf{Comparison of related methods.} ``Inv.''\ and ``Fwd.''\ denote inverse and forward dynamics components; ``True action''\ indicates supervision with ground-truth actions; ``WM'' marks methods that deliver a world model usable for planning. The last column lists the primary anti-collapse mechanism. SMWM is the only entry that combines offline reward-free world-model training with inverse dynamics as the sole anti-collapse mechanism.}
\label{tab:detailed_comparison}
\small
\setlength{\tabcolsep}{4pt}
\begin{tabular}{@{}lccccccl@{}}
\toprule
\textbf{Method} & \textbf{Inv.} & \textbf{Fwd.} & \textbf{True action} & \textbf{Reward} & \textbf{Offline} & \textbf{WM} & \textbf{Anti-collapse} \\
\midrule
ICM \citep{pathak2017curiosity}                      & \checkmark & \checkmark & \checkmark & \texttimes & \texttimes & \texttimes  & Inv.\ dynamics\ on $\phi_\theta$ \\
CURL \citep{laskin2020curl}                          & \texttimes & \texttimes & \checkmark & \checkmark & \texttimes & \texttimes  & Contrastive \\
Dreamer \citep{hafner2023mastering}                  & \texttimes & \checkmark & \checkmark & \checkmark & \texttimes & \checkmark  & Recon.\ $+$ KL \\
TD-MPC \citep{hansen2022temporal}                    & \texttimes & \checkmark & \checkmark & \checkmark & \texttimes & \checkmark  & EMA latent target \\
DynaMo \citep{cui2024dynamo}                         & \checkmark & \checkmark & \texttimes & \texttimes & \checkmark & \checkmark  & Joint inv.\ $+$ fwd.\ dyn. \\
PLDM \citep{sobal2025pldm}                           & \checkmark & \checkmark & \checkmark & \texttimes & \checkmark & \checkmark  & VICReg $+$ inv.\ dyn. \\
DINO-WM \citep{zhou2025dinowm}                       & \texttimes & \checkmark & \checkmark & \texttimes & \checkmark & \checkmark  & Frozen DINOv2 \\
LeWorldModel \citep{maes2026leworldmodel}            & \texttimes & \checkmark & \checkmark & \texttimes & \checkmark & \checkmark  & SIGReg \\
WAM \citep{han2026wam}                               & \checkmark & \checkmark & \checkmark & \checkmark & \texttimes & \checkmark  & Recon.\ $+$ KL $+$ inv.\ dyn. \\
EB-JEPA \citep{terver2026ebjepa}                     & \checkmark & \checkmark & \checkmark & \texttimes & \checkmark & \checkmark  & VICReg $+$ inv.\ dyn. \\
\midrule
\textbf{SMWM}                                        & \checkmark & \checkmark & \checkmark & \texttimes & \checkmark & \checkmark  & Inv.\ dynamics\ alone \\
\bottomrule
\end{tabular}
\end{table*}

\subsection{Inverse dynamics for representation learning}

\subsubsection{Intrinsic Curiosity Module (ICM)}
ICM \citep{pathak2017curiosity} is the canonical inverse-dynamics-as-feature-learner construction we extend. Two architectural differences matter for our claim. First, in ICM the encoder receives gradients only from the inverse loss; the forward loss treats \(\phi_\theta(o_{t+1})\) as a stop-gradient target and trains only \(g_\omega\). In our setup the encoder receives gradients from both losses, because the forward predictor is part of the world model we deliver. Second, ICM uses a separate A3C policy with its own encoder; the ICM encoder is used only to compute a scalar intrinsic reward. Our encoder \emph{is} the world-model encoder used downstream. Conceptually, ICM motivates inverse dynamics as a way to ignore action-irrelevant variation, whereas we frame it as the anti-collapse mechanism that allows joint training of an end-to-end JEPA world model without reconstruction, EMA, or distributional regularizers.

\subsubsection{Multi-step inverse identifiability theory}
A line of theoretical work establishes asymptotic recovery of the control-endogenous latent state for multi-step inverse models in finite Exogenous Block MDPs \citep{efroni2022provable,lamb2023guaranteed,islam2022agent,mhammedi2023musik}. We summarize their assumptions precisely here.

\paragraph{AC-State \citep{lamb2023guaranteed}.} AC-State is proved correct under: (i) finite endogenous state space \(S\); (ii) finite action space \(A\); (iii) deterministic endogenous transition \(T(s' \mid s, a)\); (iv) action-independent exogenous noise; (v) bounded endogenous diameter \(D\) and a behavior policy that mixes well; (vi) a multi-step inverse loss with horizon \(K \geq D\), jointly with an information bottleneck that selects, among loss-minimizers, the encoder of minimum output cardinality. The conclusion is asymptotic identification of the partition induced by \(f^*\). The authors explicitly note that the finite-state assumption ``rules out continuous control problems''. Their Appendix~B exhibits a discrete Ex-BMDP in which a \emph{single-step} inverse model cannot distinguish two distinct endogenous states; \citet{efroni2022provable} and \citet{islam2022agent} provide further such counterexamples.

\paragraph{Refinements and known failure modes.} \citet{levine2024multistep} show that even AC-State can fail in deterministic Ex-BMDPs when the witness distance between two states exceeds \(D\), and fails for any finite horizon \(K\) under periodic endogenous dynamics; they propose adding a latent forward-dynamics term to restore guarantees.

\paragraph{Implication for our work.} Our setting differs from these results on every dimension that matters: continuous (rather than finite) state and action spaces, single-step (rather than multi-step) inverse, possibly stochastic endogenous dynamics, and no information bottleneck on the encoder's output cardinality. We therefore do not claim that any of these guarantees transfer to our inverse-dynamics objective. We cite this body of work as conceptual motivation: a substantial theoretical literature confirms that inverse-dynamics objectives of various forms learn features tied to controllable structure under \emph{some} conditions, and this informs our design choice. The single-step positive result of \citet{brandfonbrener2023inverse} for a linear-Gaussian multitask imitation model is the closest theoretical endorsement of single-step inverse dynamics that we know of, but its scope (deconfounding latent task variables in imitation) does not match ours.

\subsection{Inverse-dynamics-regularized world models}

\paragraph{DynaMo \citep{cui2024dynamo}.} DynaMo is architecturally the closest neighbor to ours: joint latent inverse and forward dynamics without reconstruction. The decisive difference is action supervision: DynaMo treats actions as unobserved latents inferred jointly with the encoder, while we supervise the inverse head with the executed action \(a_t\). Ground-truth labels turn the inverse term into a hard anti-collapse signal that a co-learned action representation cannot provide.

\paragraph{LAPO and APV.} Inverse-dynamics ideas also underlie work
that \emph{infers} pseudo-actions from action-free video. LAPO
\citep{schmidt2024learning} uses inverse dynamics model
to discover latent actions and trains a forward model conditioned on
these. APV \citep{seo2022apv} pretrains an action-free latent video
predictor and fine-tunes for control. These methods address the
complementary regime where ground-truth actions are unavailable; with
actions in hand, we use them directly to supervise inverse-dynamics
prediction.

\paragraph{World-Action Model \citep{han2026wam}.} WAM augments DreamerV2 with an inverse dynamics head and reports that this regularizes the latent state toward action-relevant structure. The construction is conceptually closest to ours among reconstruction-based methods. Two differences are decisive: WAM operates within Dreamer's pixel-reconstruction ELBO and requires reward supervision; we are reconstruction-free, JEPA-style, and reward-free. WAM treats the inverse dynamics head as one auxiliary among many; we identify it as the sufficient anti-collapse mechanism for end-to-end JEPA training.

\paragraph{EB-JEPA \citep{terver2026ebjepa}.} EB-JEPA is an action-conditioned JEPA with a multi-term loss combining variance, covariance, temporal-similarity, and inverse-dynamics regularizers; their ablations report that all four terms contribute to anti-collapse. Our claim is more specific: the inverse-dynamics term \emph{alone} suffices, removing the need for variance and covariance terms and yielding a substantially simpler training objective.

\paragraph{Auxiliary-task JEPA theory \citep{yu2025auxjepa}.} Concurrent theoretical work proves a no-unhealthy-collapse theorem for JEPAs with an auxiliary regression head: in deterministic MDPs, if both the latent prediction loss and the auxiliary regression loss are zero, then observations with different transition dynamics or different auxiliary values must map to distinct embeddings. Our IDR objective is the action-prediction instantiation of this template. Our contribution complementary to theirs is empirical: we instantiate the template on continuous pixel-action data and show that single-step IDR is sufficient in practice.

\subsection{Alternative anti-collapse mechanisms in latent world models}

\subsubsection{Reconstruction-based}

The Dreamer family \citep{hafner2019dream,hafner2020mastering,hafner2023mastering} and PlaNet \citep{hafner2019learning} train recurrent state-space models with pixel-reconstruction and reward-prediction heads under a variational objective:
\begin{equation}
    \mathcal{L}_{\text{Dreamer}} = \mathbb{E}\!\left[\sum_t -\log p_\theta(o_t | h_t, z_t) - \log p_\theta(r_t | h_t, z_t) + \beta\,\mathrm{KL}\!\bigl(q_\theta(z_t | \cdot) \,\Vert\, p_\theta(z_t | h_t)\bigr)\right].
\end{equation}
DreamerV3 retains this objective family but adds engineering refinements (symlog targets, KL balancing, two-hot value targets) for stability across domains. Reconstruction provides an implicit anti-collapse signal but commits modelling capacity to pixel-level details that are often irrelevant for control. Our setting differs from Dreamer's in two ways: we train without reward labels, and we predict in embedding space rather than reconstructing observations.

\subsubsection{EMA latent targets}

TD-MPC \citep{hansen2022temporal,hansen2024tdmpc2} learns a latent encoder \(z_t = \phi_\theta(o_t)\) and a latent transition \(g_\omega\) together with reward and value heads, all trained end-to-end via TD learning. The objective combines reward, value, and a latent-consistency term that regresses \(g_\omega(z_t, a_t)\) onto \(\texttt{sg}(\phi_{\theta^-}(o_{t+1}))\), where \(\theta^-\) is an EMA copy of \(\theta\). SPR \citep{schwarzer2021dataefficient} applies the same idea to data-efficient Atari, predicting \(k\)-step latent rollouts against a BYOL-style \citep{grill2020bootstrap} momentum target encoder. Both methods rely on the EMA target as the primary defense against collapse: a slowly-moving target makes constant solutions a moving target. TD-MPC additionally requires reward supervision. Our work avoids EMA targets and reward labels, replacing both with the inverse-dynamics signal.

\subsubsection{Augmentation and contrastive methods}

CURL \citep{laskin2020curl} adds an InfoNCE \citep{oord2018representation} contrastive auxiliary loss between augmented views of the same observation, sharing the encoder with the policy and Q-function. DrQ \citep{kostrikov2020image} shows that random image-shift augmentation alone, without an explicit auxiliary loss, suffices for state-of-the-art visual RL. Proto-RL \citep{yarats2021reinforcement} uses prototypical clustering of embeddings as both an exploration bonus and a representation learner. None of these methods learns a world model in our sense: they learn encoders shared with model-free policies. Their anti-collapse mechanisms (negatives, augmentation, prototype balance) are orthogonal to inverse-dynamics regularization and could in principle be combined with it.

\subsubsection{JEPA-style regularizers}

JEPAs predict in embedding space, with the target embedding produced by an encoder applied to the future or masked observation, typically tied to the online encoder via stop-gradient and EMA \citep{lecun2022path,assran2023self,bardes2024vjepa}. In the action-conditioned regime relevant to control, several recent methods stand out. \citet{zhou2025dinowm} (DINO-WM) freeze a pretrained DINOv2 encoder and learn only a latent dynamics model on top. \citet{sobal2025pldm} (PLDM) train end-to-end with a multi-term objective combining VICReg-style regularizers \citep{bardes2022vicreg} with an inverse-dynamics term. \citet{maes2026leworldmodel} (LeWorldModel) build on LeJEPA \citep{balestriero2025lejepa} and regularize embeddings toward isotropic Gaussianity using SIGReg. \citet{assran2025vjepa2} (V-JEPA~2) scale to large internet-video corpora and rely on EMA targets and a separate action-conditioning post-training stage. Our inverse-dynamics objective adds a distinct point on this design axis: a task-grounded action-prediction signal anchoring the embedding to controllable structure.


\end{document}